\newcommand{\bP}{\mathbb{P}}
\newcommand{\defas}{\triangleq}
\newcommand{\ci}{\perp\!\!\!\perp}
\newtheorem{coro}{Corollary}
\newtheorem{prop}{Proposition}
\newtheorem{defn}{Definition}
\newtheorem{assumption}{Assumption}
\begin{document}
\title{Greedy Learning of Markov Network Structure
\thanks{The results in this paper were presented in \citep{Netetal10} without proofs of the theorems.
This paper includes all the proofs along with simulations.}
}



\author{\name Praneeth Netrapalli \email praneethn@utexas.edu \\
       \name Siddhartha Banerjee \email sbanerjee@mail.utexas.edu \\
       \name Sujay Sanghavi \email sanghavi@mail.utexas.edu \\
       \name Sanjay Shakkottai \email shakkott@mail.utexas.edu \\
       \addr 
       The University of Texas at Austin\\
       Austin, TX 78712, USA}
\editor{}


\maketitle

\begin{abstract}
We propose a new yet natural algorithm for learning the graph structure of general discrete graphical models (a.k.a. Markov random fields) from samples. Our algorithm finds the neighborhood of a node by sequentially adding nodes that produce the largest reduction in empirical conditional entropy; it is greedy in the sense that the choice of addition is based only on the reduction achieved at that iteration. Its sequential nature gives it a lower computational complexity as compared to other existing comparison-based techniques, all of which involve exhaustive searches over every node set of a certain size. Our main result characterizes the sample complexity of this procedure, as a function of node degrees, graph size and girth in factor-graph representation. We subsequently specialize this result to the case of Ising models, where we provide a simple transparent characterization of sample complexity as a function of model and graph parameters.

For tree graphs, our algorithm is the same as the classical Chow-Liu algorithm, and in that sense can be considered the extension of the same to graphs with cycles.

\end{abstract}

\section{Introduction}
Markov Random Fields (MRF), or undirected graphical models, encode conditional independence relations between random variables. Depending on the application at hand, nodes of a graphical model may represent people, genes, languages, processes, etc., while the
graphical model illustrates certain conditional dependencies among them (for example, influence in a
social network, physiological functionality in genetic networks, etc.). Often the knowledge of the
underlying graph is not available beforehand, but must be inferred from certain observations of the
system. In mathematical terms, these observations correspond to samples drawn from the underlying distribution.
Thus, the core task of structure learning is that of inferring conditional dependencies between
random variables from i.i.d samples drawn from their joint distribution. The importance of the MRF in understanding the underlying system makes structure learning an important primitive for studying such systems. 

This paper proposes a new yet natural method to infer the graph structure of an MRF from samples, and analytically characterizes its sample complexity in terms of graph and model parameters. Our algorithm is based on the fact that the graph neighborhood of a node is also its Markov blanket, and conditioned on it the node's variable is independent of all others. We build this neighborhood in a greedy fashion, by sequentially adding the nodes that give the biggest reductions in conditional entropy. Our analytical results -- both for general models and Ising models -- require lower bounds on the girth of the graph. In practice -- for both synthetic examples and a real dataset drawn from senate voting records -- our algorithm is seen to perform quite well even for graphs with lots of small cycles. 

Our algorithm has lower computational complexity as compared to other algorithms that are not tailored to specific model classes (note that if we know a-priori that we are looking for an Ising model, or a Gaussian one, faster methods exist). We review and compare our algorithms to existing literature below. We also elaborate on the sense in which our algorithm can be thought of as an extension of the Chow-Liu algorithm \citep{chowliu} to graphs with cycles.


The remaining sections are organized as follows. In Section \ref{section_prelim}, we review graphical models and some results from information theory, and set up the structure learning problem. Our new structure learning algorithm, GreedyAlgorithm$(\epsilon)$, is given in Section \ref{section_algo}. Next, in Section \ref{section_suffcond}, we develop a sufficient condition for the correctness of the algorithm for general graphs. To demonstrate the applicability of this condition, we translate it into equivalent conditions for learning an Ising model in Section \ref{section_isingguarantees}.
We present simulation results evaluating our algorithm in Section \ref{section_simulations}.
We discuss future work and conclude in Section \ref{section_discussion}. The proofs of theorems are in the Appendix.

\subsection{Related Work}

Learning the structure of graphical models is a well-established problem; existing work falls into two broad categories. The first category involves methods tailored for a specific parametric form of the probability distribution. In particular, when a parametric family is known, the (log) likelihood of the data is written as a function (often convex) of the parameters of the distribution; this likelihood is then maximized, often with added regularizers like an $\ell_1$ penalty, to find the parameters and hence the graph structure. Examples in this category include \citep{RavWaiRasYu11,ElK08,FurBen07,ZhoLafWas10,AnaTan11:Gau} for Gaussian graphical models, \citep{ravikumar,banerjee,santhanam} for Ising models, \citep{JalRavVasSan11} for general discrete {\em pairwise} graphical models.

The other category of graphical model structure learning algorithms are those that do not need to assume (and cannot leverage) specific parametric forms of the distribution. Rather, they are based on the notion that a node's Markov blanket, i.e. its neighborhood in the graphical model, makes a node conditionally independent of other nodes. Examples of such algorithms include \citep{chowliu,koller,bresler,AnaTan11:Isi,BenMon}. All of these methods involve an exhaustive search over all subsets of nodes upto a certain size $d$ -- typically the degree of the node whose neighborhood we are trying to find. This results in a high computational complexity for the algorithms. 

This paper falls into the latter category, but avoids the high computational complexity of exhaustive searching by building the sets in a greedy fashion instead. 

Finally, we note that if the graph is a tree, our method is equivalent to the classical Chow-Liu method \citep{chowliu}. In particular, \citep{chowliu} involves making a max-weight spanning tree where the edge weights are mutual information. From any given fixed node's perspective, this algorithm adds edges in the same order as our algorithm; i.e. greedily adding nodes that give the biggest reduction in conditional entropy. In that sense, our algorithm can be considered a generalization of \citep{chowliu} to graphs with cycles.

\section{Preliminaries}
\label{section_prelim}

We now setup the (standard) graphical model structure learning problem. Let $X$ be a $p$-dimensional random vector $\{X_1,X_2,\ldots,X_p\}$, where each component $X_i$ of $X$ takes values in a finite set $\mathcal{X}$. We use the shorthand notation $P(x_i) \defas \mathbb{P}(X_i=x_i), x_i\in\mathcal{X}$, and similarly for a set $A\subseteq \{1,2,\ldots,p\}$, we define $P(x_A)\defas\mathbb{P}(X_A=x_A), x_a\in\mathcal{X}^{|A|}$, where $X_A\defas\{X_i|i\in A\}$.

Let $G$ be the Markov graph of $X$, with vertex set $V$ (one node $i\in V$ for each variable $X_i$), and edge set $E$. In particular, this means that the probability distribution of $X$ satisfies the {\em local markov property} \citep{Lau96} with respect to $G$: for every $i\in V$, if its neighborhood in $G$ is $N(i)$, then for any set $B\in V\setminus \{i\}\cup N(i)$, we have that $P(x_i|x_{N(i)},x_B)=P(x_i|x_{N(i)})$ for all $(x_i,x_{N(i)},x_B)$. 

Our goal is to learn the structure of $G$ -- i.e. the set of its edges $E$ -- from $n$ vector samples $x^{(1)},\ldots,x^{(n)}$, which are drawn iid from the joint distribution. The {\em empirical distribution} $\widehat{P}$ is defined as follows; for any set $A$ of variables (nodes) and corresponding values $x_A$,
\begin{equation*}
\widehat{P}(x_A) ~ \defas ~ \frac{1}{n}\sum_{i=1}^n\mathds{1}_{\{x_A^{(i)}=x_A\}}
\end{equation*}
The empirical entropy and conditional entropy refer to the corresponding quantities for this empirical distribution $\widehat{P}$. In this paper we will refer to the true entropies by $H$ and the empirical ones by $\widehat{H}$. The following fact is immediate from conditional independence and the Data Processing Inequality, see \citep{coverthomas}.
\begin{prop}
\label{prop_hmin}
For any node $i\in V$, its neighborhood $N(i)$ in $G$, and any set $A\subseteq V\setminus\{i\}$, we have that
\begin{equation*}
H(X_i|X_{N(i)}) \leq H(X_i | X_{A}),
\end{equation*}
\end{prop}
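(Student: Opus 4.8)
The plan is to sandwich $H(X_i \mid X_{N(i)})$ between two quantities, using the conditional-independence structure encoded by $G$ on one side and the elementary fact that conditioning never increases entropy on the other. Concretely, I would introduce the enlarged conditioning set $N(i) \cup A$ and prove
\begin{equation*}
H(X_i \mid X_{N(i)}) ~=~ H(X_i \mid X_{N(i)}, X_A) ~\leq~ H(X_i \mid X_A),
\end{equation*}
from which the proposition follows immediately.

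First I would establish the equality. Set $B \defas A \setminus N(i)$, which is a subset of $V \setminus (\{i\}\cup N(i))$. The local Markov property quoted above then gives $P(x_i \mid x_{N(i)}, x_B) = P(x_i \mid x_{N(i)})$ for all values, i.e. $X_i \ci X_B \mid X_{N(i)}$. Since conditioning on $(X_{N(i)}, X_A)$ is the same as conditioning on $(X_{N(i)}, X_B)$ — the coordinates of $A$ lying in $N(i)$ are already present — this says that adjoining $X_A$ to the conditioning variables $X_{N(i)}$ does not change the conditional law of $X_i$, and hence does not change its conditional entropy; equivalently $I(X_i ; X_A \mid X_{N(i)}) = 0$. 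This is the conditional-independence/data-processing input referred to just before the statement.

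Second, the inequality $H(X_i \mid X_{N(i)}, X_A) \leq H(X_i \mid X_A)$ is precisely ``conditioning reduces entropy'' applied with the additional conditioning variables $X_{N(i)}$ (see \citep{coverthomas}); equivalently $I(X_i ; X_{N(i)} \mid X_A) \geq 0$. Chaining the equality from the previous step with this inequality gives $H(X_i \mid X_{N(i)}) \leq H(X_i \mid X_A)$, as claimed.

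I do not expect a real obstacle here: both steps are one-line invocations of standard information-theoretic facts. The only point needing a little care is the bookkeeping of the overlap $A \cap N(i)$, so that the local Markov property is applied to the genuinely disjoint set $B$ its hypothesis requires; once that is handled the argument is complete.
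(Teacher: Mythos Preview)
Your argument is correct and matches the paper's reasoning: the paper does not spell out a proof but simply notes the proposition is immediate from conditional independence and the Data Processing Inequality, which is exactly your two-step sandwich $H(X_i\mid X_{N(i)})=H(X_i\mid X_{N(i)},X_A)\leq H(X_i\mid X_A)$. Your explicit handling of the overlap $A\cap N(i)$ via $B=A\setminus N(i)$ is the right bookkeeping to make the local Markov hypothesis apply cleanly.
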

Motivated by this relationship, \citep{koller} advocated finding $N(i)$ by exhaustive searching over all sets of size less than $d$, where $d$ is the (upper bound on the) degree of node $i$. Our method avoids this exhaustive search, but builds the neighborhood in a sequential greedy fashion. 

Of course, any algorithm would need to work with samples, which in our case would be empirical entropies. We find the following result -- obtained by combining Theorem $16.3.2$ and Lemma $16.3.1$ from \citep{coverthomas} -- useful in translating between conditions on the true and empirical entropy quantities.

\begin{prop}
\label{prop_hnearp}
Let $P$ and $Q$ be two probability mass functions in a finite set $\mathcal{X}$, with entropies $H(P)$ and $H(Q)$ respectively, and with total variational distance $||P-Q||_1$ given by:
\begin{equation*}
||P-Q||_1=\sum_{x\in \mathcal{X}}|P(x)-Q(x)|.
\end{equation*}
Then 
\begin{equation}
\label{eqn_hleqp}
|H(P)-H(Q)|\leq -||P-Q||_1\log\frac{||P-Q||_1}{|\mathcal{X}|}.
\end{equation}
Further, if the relative entropy between them is given by $D(P||Q)$, then
\begin{equation}
\label{eqn_pleqh}
D(P||Q)\geq \frac{1}{2\log 2}||P-Q||_1^2.
\end{equation}
\end{prop}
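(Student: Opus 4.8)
The plan is to establish the two inequalities separately; both are classical --- they are exactly the pieces cited from \citep{coverthomas} --- so the real content is in reassembling standard arguments rather than in any new idea.

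\textbf{The entropy bound (\ref{eqn_hleqp}).} First I would reduce to a coordinatewise estimate: writing $h(t)=-t\log t$ on $[0,1]$ (with $h(0)=0$), we have $|H(P)-H(Q)|=\bigl|\sum_{x\in\mathcal X}\bigl(h(P(x))-h(Q(x))\bigr)\bigr|\le\sum_{x\in\mathcal X}\bigl|h(P(x))-h(Q(x))\bigr|$. The key lemma is that $|h(a)-h(b)|\le h(|a-b|)$ whenever $a,b\in[0,1]$ and $|a-b|\le\tfrac12$: when $h(b)-h(a)$ has the same sign as $b-a$ this is just subadditivity of the concave function $h$ (namely $h(a+\nu)\le h(a)+h(\nu)$), and the opposite case follows from a short monotonicity argument in which the hypothesis $|a-b|\le\tfrac12$ enters essentially (the relevant quantity is monotone in $a$, and at the extreme value of $a$ the needed inequality reduces to $h(1-|a-b|)\le h(|a-b|)$). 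Since a single coordinate obeys $|P(x)-Q(x)|\le\sum_y|P(y)-Q(y)|=\|P-Q\|_1$, the lemma applies coordinatewise in the regime $\|P-Q\|_1\le\tfrac12$ (where this estimate is used), giving $|H(P)-H(Q)|\le\sum_x h(|P(x)-Q(x)|)$. Then, setting $a_x=|P(x)-Q(x)|$ and $T=\sum_x a_x=\|P-Q\|_1$, I would rewrite $\sum_x h(a_x)=-\sum_x a_x\log a_x=T\sum_x\tfrac{a_x}{T}\log\tfrac{T}{a_x}-T\log T$; the sum $\sum_x\tfrac{a_x}{T}\log\tfrac{T}{a_x}$ is the Shannon entropy of the probability vector $(a_x/T)_{x\in\mathcal X}$ and is therefore at most $\log|\mathcal X|$, which yields $|H(P)-H(Q)|\le T\log|\mathcal X|-T\log T=-\|P-Q\|_1\log\tfrac{\|P-Q\|_1}{|\mathcal X|}$.

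\textbf{Pinsker's inequality (\ref{eqn_pleqh}).} Here I would use the standard reduction to a binary alphabet. Let $A=\{x\in\mathcal X:P(x)\ge Q(x)\}$ and let $\bar P=(P(A),1-P(A))$, $\bar Q=(Q(A),1-Q(A))$ be the two-point distributions obtained by thresholding on $A$. Since $\sum_x(P(x)-Q(x))=0$, the positive part of $P-Q$ sums to $\tfrac12\|P-Q\|_1$, so $P(A)-Q(A)=\tfrac12\|P-Q\|_1$ and hence $\|\bar P-\bar Q\|_1=2|P(A)-Q(A)|=\|P-Q\|_1$. By the data processing inequality for relative entropy (applied to the indicator of $A$), $D(P\|Q)\ge D(\bar P\|\bar Q)$, so it suffices to prove the scalar inequality; converting to natural logarithms and using $\|\bar P-\bar Q\|_1^2=4(p-q)^2$ with $p=P(A)$, $q=Q(A)$, this is $d(p\|q)\ge 2(p-q)^2$, where $d(p\|q)=p\ln\tfrac pq+(1-p)\ln\tfrac{1-p}{1-q}$. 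Fixing $q$ and setting $g(p)=d(p\|q)-2(p-q)^2$, one checks $g(q)=0$, $g'(q)=0$, and $g''(p)=\tfrac1{p(1-p)}-4\ge0$ because $p(1-p)\le\tfrac14$; hence $g$ is convex with minimum value $g(q)=0$, so $g\ge0$ on $[0,1]$, which gives the scalar bound and therefore (\ref{eqn_pleqh}).

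The reductions, the one-variable calculus in the binary Pinsker step, and the final step bounding an entropy by $\log|\mathcal X|$ are all routine. The one point that genuinely needs care is the coordinatewise lemma $|h(a)-h(b)|\le h(|a-b|)$: this inequality fails once $|a-b|>\tfrac12$, so the proof must carry that constraint throughout --- which is also why (\ref{eqn_hleqp}) is only useful, and is normally quoted, for $\|P-Q\|_1$ small.
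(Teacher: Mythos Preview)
Your proof is correct and is essentially the standard argument from \citep{coverthomas}; the paper itself does not supply a proof of this proposition but simply cites that reference (Theorem~16.3.2 and Lemma~16.3.1 there), so there is nothing to compare against beyond noting that your argument is exactly the one being invoked. Your observation that the coordinatewise lemma $|h(a)-h(b)|\le h(|a-b|)$ requires $|a-b|\le\tfrac12$, and hence that (\ref{eqn_hleqp}) is really only being asserted for $\|P-Q\|_1\le\tfrac12$, is well taken --- the paper's statement suppresses this hypothesis, but every downstream use of (\ref{eqn_hleqp}) in the paper is in the small-$\|P-Q\|_1$ regime, so no harm is done.
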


We characterize the sample complexity of our algorithm for a class of graphs and models. We specify these models in terms of their factor graph, which we define below for completeness.
\begin{defn}
  \textbf{(Factor Graph)} Given a graphical model $G(V,E)$ its factor graph is a bipartite graph $G_f$ with vertex set $V\cup C$
where each vertex $c \in C$ corresponds to a maximal clique in $G$. For any $v\in V$ and $c\in C$, there is an edge $\{v,c\}$ in
$G_f$ if and only if $v \in c$ in $G$.
\end{defn}
We have the following simple lemma relating the distance between two nodes $i,j\in V$ in the graphs $G$ and $G_f$.
\begin{lemma}\label{lemma:factorgraph-actgraph-distance}
  Given a graph $G$, let $G_f$ be its factor graph. Then for every $i,j\in V$ we have $d_f(i,j) = 2d(i,j)$ where
$d$ and $d_f$ are the distances between $i$ and $j$ in $G$ and $G_f$ respectively.
\end{lemma}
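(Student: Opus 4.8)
The plan is to establish the two inequalities $d_f(i,j)\le 2d(i,j)$ and $d_f(i,j)\ge 2d(i,j)$ separately; together they give the claim. Throughout I will use the elementary fact that every edge $\{u,v\}\in E$ is contained in at least one maximal clique of $G$ (since $\{u,v\}$ is itself a clique, and is thus contained in some maximal one), and hence, by the definition of the factor graph, there is a vertex $c\in C$ that is adjacent in $G_f$ to both $u$ and $v$.

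For the upper bound, I would take a shortest path $i=u_0,u_1,\ldots,u_m=j$ in $G$, so that $m=d(i,j)$. For each edge $\{u_{k-1},u_k\}$ pick a maximal clique $c_k\in C$ containing it. Then $i=u_0,c_1,u_1,c_2,\ldots,c_m,u_m=j$ is a walk in $G_f$ of length $2m$, so $d_f(i,j)\le 2m=2d(i,j)$.

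For the lower bound, I would take a shortest path in $G_f$ from $i$ to $j$. Since $G_f$ is bipartite with parts $V$ and $C$, and the path starts and ends in $V$, it has even length, say $2\ell$, of the form $i=v_0,c_1,v_1,c_2,v_2,\ldots,c_\ell,v_\ell=j$ with each $v_t\in V$ and each $c_t\in C$. For each $t$, the clique-vertex $c_t$ is adjacent in $G_f$ to both $v_{t-1}$ and $v_t$, so $v_{t-1}$ and $v_t$ both belong to the maximal clique $c_t$ of $G$; hence either $v_{t-1}=v_t$ or $\{v_{t-1},v_t\}\in E$. Thus $v_0,v_1,\ldots,v_\ell$ is a walk in $G$ from $i$ to $j$ of length at most $\ell$, which gives $d(i,j)\le \ell=d_f(i,j)/2$.

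Combining the two bounds yields $d_f(i,j)=2d(i,j)$. I do not anticipate a genuine obstacle here; the only points requiring a little care are the structural observations that every $i$--$j$ walk in $G_f$ alternates between $V$ and $C$ and therefore has even length (which is what forces the factor $2$), and that consecutive $V$-vertices on such a walk are forced to lie in a common clique of $G$. Degenerate cases (such as $i=j$, or $i$ and $j$ lying in a common maximal clique) are handled by exactly the same argument.
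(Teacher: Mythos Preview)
Your proof is correct and complete. The paper itself does not provide a proof of this lemma, describing it only as ``a simple lemma'' and stating it without argument; your two-inequality approach via the bipartite structure of $G_f$ is the natural way to establish it.
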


\section{The GreedyAlgorithm$(\epsilon)$ Structure Learning Algorithm}
\label{section_algo}

We now present our method, GreedyAlgorithm$(\epsilon)$, which proceeds by finding the Markov neighborhood of each node separately. For the neighborhood of node $i$, it starts with an empty set and iteratively adds nodes that bring the largest additional decrease in (empirical) conditional entropy. It stops when this decrease is less than $\epsilon\slash 2$. The formal specification is presented in Algorithm \ref{GA}.


\begin{algorithm}[h]
\caption{GreedyAlgorithm($\epsilon$)}
\label{GA}
\begin{algorithmic}[1]
 \FOR{$i \in V$}
    \STATE complete $\leftarrow$ FALSE
    \STATE $\widehat{N}(i) \leftarrow \Phi$
    \WHILE{!complete}
      \STATE $j = \underset{k \in V \setminus \widehat{N}(i)}{\operatorname{argmin}} \widehat{H}(X_i \mid X_{\widehat{N}(i)}, X_k)$
      \IF{$\widehat{H}(X_i \mid X_{\widehat{N}(i)}, X_j) < \widehat{H}( X_i \mid X_{\widehat{N}(i)}) - \frac{\epsilon}{2}$}
	  \STATE $\widehat{N}(i) \leftarrow \widehat{N}(i) \cup \{j\}$
      \ELSE
	  \STATE complete $\leftarrow$ TRUE
      \ENDIF
    \ENDWHILE
 \ENDFOR
\end{algorithmic}
\end{algorithm}
%
 
\section{Sufficient Conditions for General Discrete Graphical Models}
\label{section_suffcond}

In this section, we provide guarantees for general discrete graphical models, under which GreedyAlgorithm$(\epsilon)$ recovers the graphical model structure exactly. First, using an example, we build up intuition for the sufficient conditions, and define two key notions: non-degeneracy conditions and correlation decay. Our main result is presented in Section \ref{subsection_guarantees}, wherein we give a sufficient condition for the correctness of the algorithm in general discrete graphical models.  

\subsection{Non-Degeneracy and Correlation Decay}
\label{subsec_assump}

Before analyzing the correctness of structure learning from samples, a simpler problem worth considering is one of algorithm consistency, i.e., does the algorithm succeed to identify the true graph \textit{given the true conditional distributions} (or in other words, given an infinite number of samples). It turns out that the algorithm as presented in Algorithm \ref{GA} does not even possess this property, as is illustrated by the following counter-example

Let $V = \{0,1,\cdots, D,D+1\}$, $X_i \in \{-1,1\} \forall i \in V$ and $E = \left\{ \{0,i\},\{i,D+1\} \mid 1 \leq i \leq D \right\}$.
Let $P(x_V) = \frac{1}{Z} \displaystyle \prod_{\{i,j\} \in E} e^{ \theta x_i x_j}$, where $Z$ is a normalizing constant (this is the classical zero-field Ising model potential). The graph is shown in Fig. \ref{fig_counterexample}.

\begin{figure}[h]
  \centering 
  \includegraphics[width=4cm]{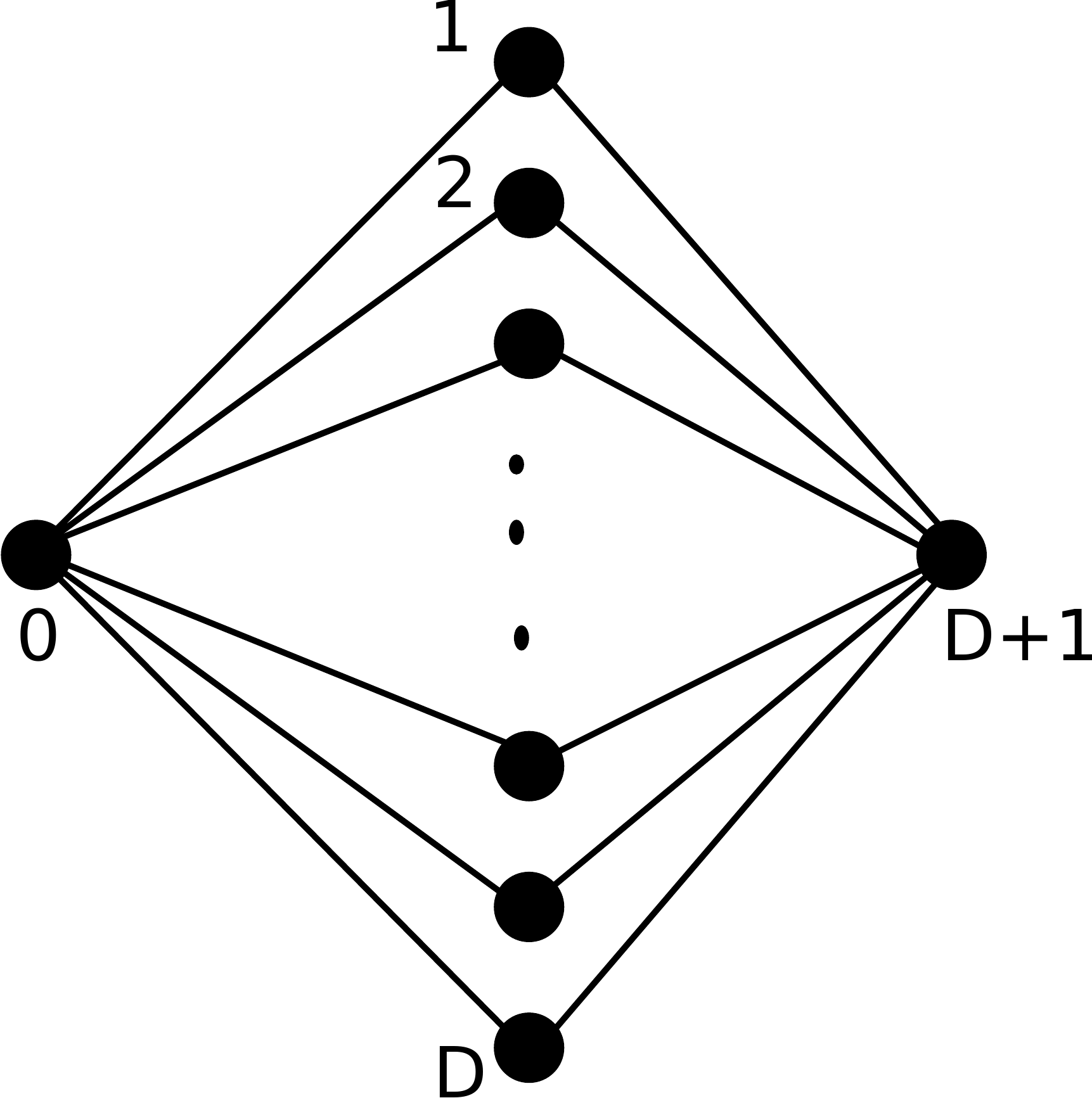} 
  \caption{An example of adding spurious nodes: Execution of GreedyAlgorithm$(\epsilon)$ for node $0$ adds node $D+1$ in the first iteration, even though it is not a neighbor.}
\label{fig_counterexample}
\end{figure}

Suppose the actual entropies are given as input to Algorithm \ref{GA}. It can be shown in this case that for a given $\theta$, there exists a $D_{\mbox{thresh}}$ such that if $D>D_{\mbox{thresh}}$, then the output of Algorithm \ref{GA} will select the edge $\{0,D+1\}$ in the first iteration. This is easily understood because if $D$ is large, the distribution of node $0$ is best accounted for by node $D+1$, although it is not a neighbor. Thus, even with exact entropies, the algorithm will always include edge $(0,D+1)$, although it does not exist in the graph. 

The algorithm can however easily be shown to satisfy the following weaker consistency guarantee: given infinite samples, for any node in the graph, the algorithm will return a \textit{super-neighborhood}, i.e., a superset of the neighborhood of $i$. This suggests a simple fix to obtain a consistent algorithm, as we can follow the greedy phase by a `node-pruning' phase, wherein we test each node in the neighborhood of a node $i$ returned by the algorithm (to do this, we can compare the entropy of $i$ conditioned on the neighborhood with and without a node, and remove it if they are the same). However the problem is complicated by the presence of samples, as pruning a large super-neighborhood requires calculating estimates of entropy conditioned on a large number of nodes, and hence this drives up the sample complexity. In the rest of the paper, we avoid this problem by ignoring the pruning step, and instead prove a stronger correctness guarantee: given any node $i$, the algorithm always picks a \textit{correct} neighbor of $i$ as long as any one remains undiscovered. Towards this end, we first define two conditions which we require for the correctness of GreedyAlgorithm$(\epsilon)$.

\noindent \begin{assumption}[\textbf{Non-degeneracy}] \label{assump_nondegeneracy} Choose a node $i$. Let $N(i)$ be the set of its neighbors.
Then $\exists \epsilon > 0$ such that $\forall \; A \subset N(i)$, $\forall \; j \in N(i) \setminus A$ and
$\forall \; l \in N(j) \setminus \{i\}$, we have that 

\vspace{-0.5cm}
\begin{equation}\label{eqn_nondegen_2}
H(X_i\mid X_A) - H(X_i\mid X_A, X_j) > \epsilon \mbox{ and}
\end{equation}
\vspace{-0.5cm}
\begin{equation}\label{eqn_nondegen_1}
H(X_i\mid X_A, X_l) - H(X_i\mid X_A, X_j, X_l) > \epsilon 
\end{equation}

\end{assumption}

Assumption \ref{assump_nondegeneracy} is illustrated in Fig. \ref{fig_assump_nondegen}.
\begin{figure}[h]
  \centering 
  \includegraphics[width=4cm]{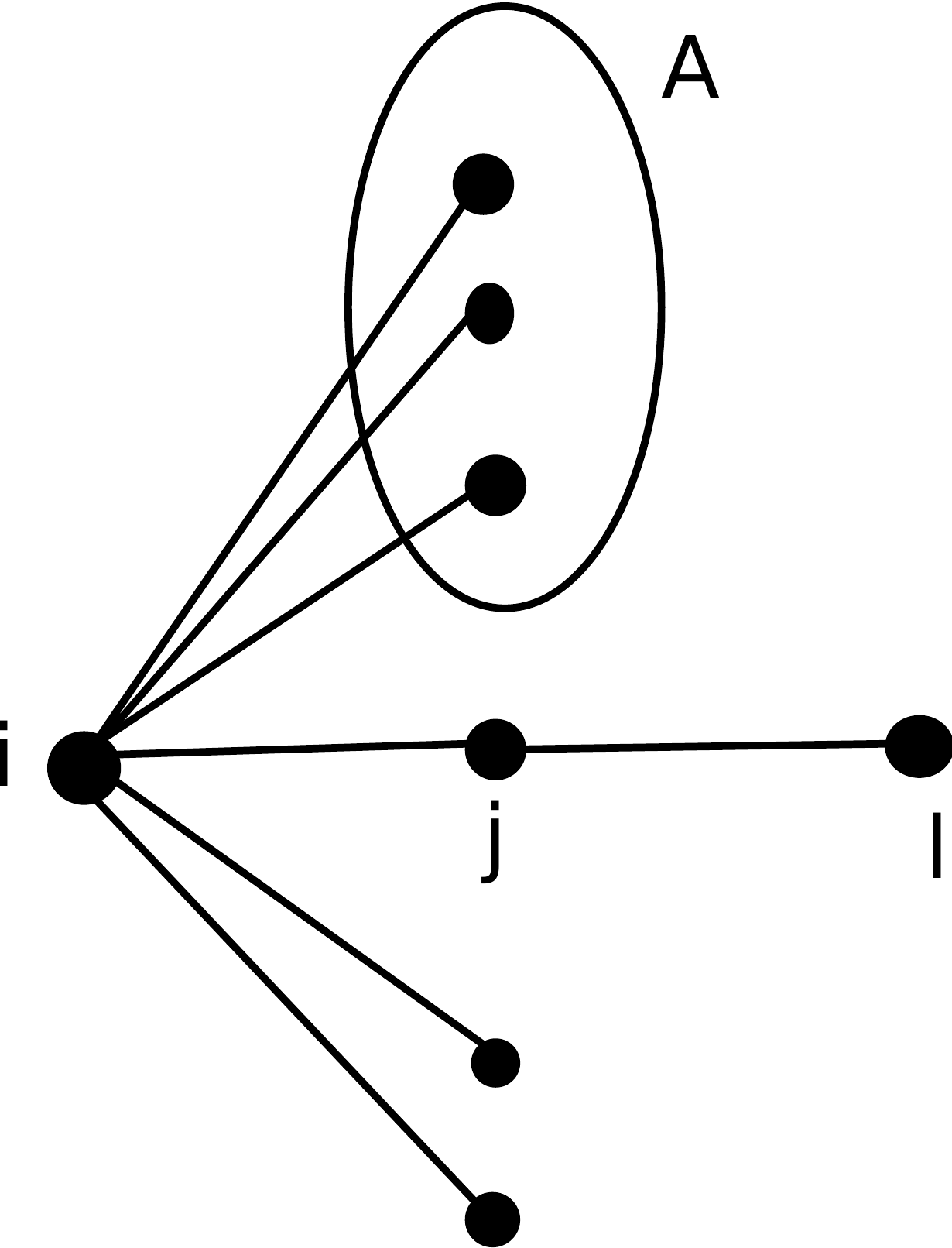}
  \caption{Non-degeneracy condition for node $i$: $(i)$ Entropy of $i$ conditioned on any sub-neighborhood $A$ reduces by at-least $\epsilon$ if any other neighbor $j$ is added to the conditioning set, $(ii)$ Entropy of $i$ conditioned on $A$ and a two hop neighbor $l$ reduces by at-least $\epsilon$ if the corresponding one hop neighbor $j$ is added to the conditioning set}
  \label{fig_assump_nondegen}
\end{figure}

\noindent \begin{assumption} [\textbf{Correlation Decay}] \label{assump_corrdecay}
Choose a node $i$. Let $N^1(i)$ and $N^2(i)$ be the sets of its $1$-hop and $2$-hop neighbors respectively.
Choose another set of nodes $B$. Let $d(i,B) = \displaystyle \min_{j \in B} d(i,j)$, where
$d(i,j)$ denotes the distance between nodes $i$ and $j$. Then, we have that $\forall x_i,x_{N^1(i)},x_{N^2(i)},x_B$
\begin{align*}
\displaystyle \left|P(x_i,x_{N^1(i)},x_{N^2(i)}\mid x_B) - P(x_i,x_{N^1(i)},x_{N^2(i)})\right| < f(d(i,B))
\end{align*}
where $f$ is a monotonic decreasing function.
\end{assumption}

Assumption \ref{assump_nondegeneracy} (or a variant thereof) is a standard assumption for showing correctness of any structure learning algorithm, as it ensures that there is a \textit{unique} minimal graphical model for the distribution from which the samples are generated. Although the way we state the assumption is tailored to our algorithm, it can be shown to be equivalent to similar assumptions in literature\citep{bresler}. Informally speaking, Assumption \ref{assump_nondegeneracy} states that for node $i$, any $2$-hop neighbor captures less information about node $i$ than the corresponding $1$-hop neighbor. In the case of a Markov Chain, Assumption \ref{assump_nondegeneracy} reduces to a weaker version of an $\epsilon-$Data Processing Inequality (i.e., DPI with an epsilon gap), and in a sense, Assumption \ref{assump_nondegeneracy} can be viewed as a generalized $\epsilon-$DPI for networks with cycles. 

On the other hand, Assumption \ref{assump_corrdecay} along with
large girth implies that the information a node $j$ has about node $i$ is `almost Markov' along the shortest path between $i$ and $j$.
This in conjunction with Assumption \ref{assump_nondegeneracy} implies that for any two nodes $i$ and $k$, the information about $i$
captured by $k$ is less than that captured by $j$ where $j$ is the neighbor of $i$ on the shortest path between $i$ and $k$. It is also known \citep{BenMon} that structure learning is a much harder problem when there is no correlation decay.

\subsection{Guarantees for the Recovery of a General Graphical Model} 
\label{subsection_guarantees}

We now state our main theorem, wherein we give a sufficient condition for correctness of GreedyAlgorithm$(\epsilon)$ in a general graphical model. 

The counter-example given in Section \ref{subsec_assump} suggests that the addition of spurious nodes to the neighborhood of $i$ is related to the existence of non-neighboring nodes of $i$ which somehow accumulate sufficient influence over it. The accumulation of influence is due to slow decay of influence on short paths (corresponding to a high $\theta$ in the example), and the effect of a large number of such paths (corresponding to high $D$). Correlation decay (Assumption \ref{assump_corrdecay}) allows us to control the first. Intuitively, the second can be controlled if the neighborhood of $i$ is `locally tree-like'. To quantify this notion, we define the girth of a graph $\mbox{Girth}(G)$ to be the length of the smallest cycle in the graph $G$. Now we have the following theorem. 

\begin{theorem}\label{thm_main}
Consider a graphical model $G$ where the random variable corresponding to each node takes values in a set $\mathcal{X}$ and satisfies the following:
\begin{itemize}
\item Non-degeneracy (Assumption \ref{assump_nondegeneracy}) with parameter $\epsilon$,
\item Correlation decay (Assumption \ref{assump_corrdecay}) with decay function $f(\cdot)$,
\item Maximum degree $D$.
\end{itemize}
Define $h \defas h(\epsilon,D) \defas \frac{\epsilon^2 |\mathcal{X}|^{-2(D+1)^2}}{64}$ and suppose $f^{-1}(h)$ exists. Further suppose $G_f$
(the factor graph of $G$) obeys the following condition:
\begin{equation}
\label{eqn_gcondn}
\mbox{Girth}(G_f)\defas g_f > 4 \left( f^{-1}\left( h \right) + 1 \right).
\end{equation}
Then, given $\delta > 0$, GreedyAlgorithm$(\epsilon)$ recovers $G$ exactly with probability greater than $1-\delta$ with sample complexity $n = \xi\left(\epsilon^{-4} \log\frac{p}{\delta}\right)$, where $\xi$ is a constant independent of $p,\epsilon$ and $\delta$.

\end{theorem}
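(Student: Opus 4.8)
The plan is to decouple the argument into a deterministic ``population-level'' structural claim and a statistical concentration step, and then glue the two together. Throughout, the constant $\xi$ will be allowed to depend on $D$ and $|\mathcal{X}|$ (which are absorbed freely), but not on $p,\epsilon,\delta$.

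\emph{Concentration step.} First I would show that with $n = \xi\,\epsilon^{-4}\log(p/\delta)$ samples there is an event of probability at least $1-\delta$ on which every conditional entropy the algorithm can ever query is accurate: $|\widehat H(X_i\mid X_A) - H(X_i\mid X_A)| < \epsilon/16$ simultaneously for all $i\in V$ and all $A\subseteq V\setminus\{i\}$ with $|A|\le D+1$. An easy induction on the iteration count shows that, on this event, the algorithm never conditions on more than $D+1$ nodes, so this range is enough. For a fixed $(i,A)$ the empirical joint law of $(X_i,X_A)$ lives on at most $|\mathcal{X}|^{D+2}$ atoms, so a standard $\ell_1$ concentration bound for empirical multinomials gives $\|\widehat P - P\|_1 \lesssim \sqrt{\log(1/\delta')/n}$ with probability $1-\delta'$; equation~\eqref{eqn_hleqp} of Proposition~\ref{prop_hnearp} then converts this into a bound of order $\sqrt{\log(1/\delta')/n}\,\log\!\bigl(n/\log(1/\delta')\bigr)$ on the entropy error, and the same holds for $\widehat H(X_A)$, hence for the conditional entropy. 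A union bound over the at most $p^{D+2}$ pairs $(i,A)$ with $\delta' = \delta\,p^{-(D+2)}$ yields the stated sample size, once we crudely bound the resulting $\mathrm{polylog}(1/\epsilon)$ factor by $\epsilon^{-2}$ --- which is precisely why the exponent is $4$ rather than $2$.

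\emph{Structural step.} Next I would prove that if the \emph{true} conditional entropies satisfy Assumptions~\ref{assump_nondegeneracy} and~\ref{assump_corrdecay} and $G_f$ obeys~\eqref{eqn_gcondn}, then for every node $i$ and every $A\subsetneq N(i)$,
\[
\min_{j\in N(i)\setminus A} H(X_i\mid X_A,X_j) \;<\; H(X_i\mid X_A,X_k) \;-\; \tfrac{\epsilon}{2}\qquad\text{for every }k\in V\setminus(N(i)\cup A),
\]
while for $A=N(i)$ the local Markov property gives $H(X_i\mid X_A)-H(X_i\mid X_A,X_k)=0$ for every $k\notin N(i)$. The first inequality is where the work lies. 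That some $j\in N(i)\setminus A$ has $H(X_i\mid X_A)-H(X_i\mid X_A,X_j)>\epsilon$ is immediate from~\eqref{eqn_nondegen_2}. To bound a non-neighbor $k$, consider two regimes. If $d(i,k)$ is large --- which, via Lemma~\ref{lemma:factorgraph-actgraph-distance} and the girth bound~\eqref{eqn_gcondn}, is forced whenever $k$ lies outside the locally tree-like ball around $i$, and is also the situation when $k$ is separated from $i$ by $A$ --- then Assumption~\ref{assump_corrdecay} gives $|P(x_i,x_{N^1(i)},x_{N^2(i)}\mid x_k) - P(x_i,x_{N^1(i)},x_{N^2(i)})| < f(d(i,k)) \le h$; marginalizing onto $(x_i,x_A)$ (and onto $x_A$) and splitting off the configurations $x_A$ of probability below a threshold $\tau\asymp\epsilon\,|\mathcal{X}|^{-D}$, whose total entropy contribution is negligible, shows $P(x_i\mid x_A,x_k)$ is uniformly close to $P(x_i\mid x_A)$, whence $H(X_i\mid X_A)-H(X_i\mid X_A,X_k)<\epsilon/4$ by~\eqref{eqn_hleqp} --- and the choice $h\asymp\epsilon^{2}|\mathcal{X}|^{-2(D+1)^2}$ is exactly what makes the two error terms balance below this level. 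If instead $k$ is a $2$-hop neighbor reached through a genuine neighbor $j\notin A$, then the girth bound rules out two common neighbors of $i$ and $k$ (that would be an $8$-cycle in $G_f$) and certifies that $j$ is, up to correlation-decay error of size $\lesssim h$, a separator of $i$ and $k$, so $I(X_i;X_k\mid X_A,X_j)<\epsilon/2$; combining this with~\eqref{eqn_nondegen_1} applied to $(A,j,l=k)$ gives $H(X_i\mid X_A,X_j) < H(X_i\mid X_A,X_k)-\epsilon/2$, i.e.\ the on-path neighbor strictly wins (longer short paths inside the tree-like ball are handled by iterating this separator argument).

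\emph{Gluing.} On the good concentration event, at any iteration with current set $\widehat N(i)=A\subsetneq N(i)$, the node selected in line~5 must be a genuine neighbor of $i$: the structural inequality has slack $\epsilon/2$ and the conditional-entropy estimates involved are each accurate to within $\epsilon/16$, whose sum is well below $\epsilon/2$; moreover the test in line~6 then succeeds, since the genuine neighbor's reduction exceeds $\epsilon-\epsilon/8>\epsilon/2$. Once $A=N(i)$, every remaining node offers empirical reduction at most $2\cdot\epsilon/16<\epsilon/2$, so the test fails and the loop halts with $\widehat N(i)=N(i)$. Running this for each $i$ recovers $E$ exactly, which proves the theorem.

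The main obstacle is the structural step, and within it the $2$-hop (and longer short-path) non-neighbors: unlike distant nodes, these retain non-vanishing correlation with $i$, so ruling them out requires the girth hypothesis to certify an (approximate) graph separator and then the second inequality of Assumption~\ref{assump_nondegeneracy} to upgrade ``approximate separation plus a genuine neighbor on the path'' into a strict entropy advantage --- all while carefully accounting for the small-probability conditioning configurations, which is exactly what dictates the precise, and rather small, form of $h(\epsilon,D)$.
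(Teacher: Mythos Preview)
Your three-part decomposition (concentration, structural, gluing) and the concentration and gluing pieces match the paper's proof essentially verbatim; the paper records these as Lemma~\ref{lemma_greedyalgo_requirement_empirical} and the final induction respectively.

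The structural step (the paper's Lemma~\ref{lemma_greedyalgo_requirement}) is where you diverge, and your version has a gap. You split non-neighbors $k$ into ``distant'' and ``$2$-hop,'' but this dichotomy is not exhaustive: a non-neighbor at graph distance $3,4,\dots,f^{-1}(h)$ from $i$ that is not separated by $A$ falls into neither bucket --- it is too close for your direct bound $f(d(i,k))\le h$ to hold, yet not a $2$-hop neighbor. (You also conflate ``$A$ separates $i$ from $k$,'' which gives $I(X_i;X_k\mid X_A)=0$ exactly and has nothing to do with distance, with ``$d(i,k)$ large.'') The parenthetical ``iterating this separator argument'' does not obviously close this: each iteration would incur a fresh correlation-decay error, and the number of iterations grows with $d(i,k)$, so without further work you would need a stronger girth hypothesis than~\eqref{eqn_gcondn}.

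The paper avoids the case split entirely. For \emph{any} non-neighbor $u$ not separated from $i$ by $A$, it takes the shortest $i$--$u$ path in $G_f\setminus A$ and lets $j\in N(i)\setminus A$ and $l\in N(j)\setminus\{i\}$ be its first two hops. The girth bound supplies a far set $B$ with $A\cup\{j\}\cup B$ an \emph{exact} separator of $i$ and $l$ in $G_f$ and $d(i,B)\ge (g_f-4)/4$, whence correlation decay gives $H(X_i\mid X_A,X_j,X_l)>H(X_i\mid X_A,X_j)-\widehat\epsilon$; a symmetric argument with $l$ in place of $j$ gives $H(X_i\mid X_A,X_l,X_u)>H(X_i\mid X_A,X_l)-\widehat\epsilon$. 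Chaining these with~\eqref{eqn_nondegen_1} and the trivial monotonicity $H(X_i\mid X_A,X_l,X_u)\le H(X_i\mid X_A,X_u)$ yields $H(X_i\mid X_A,X_j)<H(X_i\mid X_A,X_u)-3\epsilon/4$ uniformly in $u$, with exactly two correlation-decay corrections regardless of $d(i,u)$. The pivot through the fixed $2$-hop node $l$ on the path is the key device your plan is missing.
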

The proof follows from the following two lemmas. Lemma \ref{lemma_greedyalgo_requirement} implies that if we had access to actual entropies,
Algorithm \ref{GA} always recovers the neighborhood of a node exactly. Lemma \ref{lemma_greedyalgo_requirement_empirical} shows that
with the number of samples $n$ as stated in Theorem \ref{thm_main}, the empirical entropies are very close to the actual entropies with
high probability and hence Algorithm \ref{GA} recovers the graphical model structure exactly with high probability
even with empirical entropies.

\begin{lemma}\label{lemma_greedyalgo_requirement}
 Consider a graphical model $G$ in which node $i$ satisfies Assumptions \ref{assump_nondegeneracy} and \ref{assump_corrdecay}. Let the girth of $G_f$ be
$ g_f> 4 \left( f^{-1}\left( h \right) + 1 \right)$, where $h$ is as defined in Theorem \ref{thm_main}.
Then, $\forall \; A\subsetneq N(i), \; u \notin N(i), \;\exists \; j \in N(i)\setminus A$ such that
\begin{equation}\label{eqn_greedyalgo_condn}
 H(X_i \mid X_A,X_j) < H(X_i \mid X_A,X_u) - \frac{3\epsilon}{4}
\end{equation}
\end{lemma}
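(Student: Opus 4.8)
The plan is to exhibit, for any $A \subsetneq N(i)$ and any $u \notin N(i)$, a witness $j \in N(i)\setminus A$ for \eqref{eqn_greedyalgo_condn} by taking $j$ to be the neighbor of $i$ lying on a shortest $i$--$u$ path in $G$ (with a separate argument in the degenerate situation where that neighbor already lies in $A$). The engine of the proof is the entropy identity
\begin{equation*}
H(X_i \mid X_A, X_u) - H(X_i \mid X_A, X_j) = I(X_i; X_j \mid X_A, X_u) - I(X_i; X_u \mid X_A, X_j),
\end{equation*}
obtained by inserting and cancelling $H(X_i \mid X_A, X_j, X_u)$. It therefore suffices to bound the first conditional mutual information below by (essentially) $\epsilon$ and the second above by (essentially) $\epsilon/4$; the $3\epsilon/4$ gap then absorbs the slack. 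Non-degeneracy (Assumption \ref{assump_nondegeneracy}) will deliver the lower bound, and correlation decay (Assumption \ref{assump_corrdecay}) together with the girth hypothesis \eqref{eqn_gcondn} will deliver the upper bound.

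For the lower bound on $I(X_i; X_j \mid X_A, X_u)$: when $u$ is a $2$-hop neighbor of $i$ through $j$, i.e. $j \in N(i)\setminus A$ and $u \in N(j)\setminus\{i\}$, part \eqref{eqn_nondegen_1} of Assumption \ref{assump_nondegeneracy} with $l=u$ gives directly $I(X_i;X_j\mid X_A,X_u) = H(X_i\mid X_A,X_u) - H(X_i\mid X_A,X_j,X_u) > \epsilon$. When $d(i,u)\ge 3$, I would first use the tree-like structure forced by \eqref{eqn_gcondn} to show $X_i$ and $X_u$ are nearly conditionally independent given $X_A$ and the first path-node $X_j$, and then run a data-processing/monotonicity step along the (approximately Markov) shortest path to reduce the estimate either to the $2$-hop case above or to part \eqref{eqn_nondegen_2} of Assumption \ref{assump_nondegeneracy} applied with an arbitrary $j\in N(i)\setminus A$. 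This last alternative is exactly what is needed in the degenerate sub-case where the connecting neighbor lies in $A$: then $X_A$ already ``blocks'' $u$, so conditioning additionally on $X_u$ changes $H(X_i\mid X_A)$ by almost nothing, and any $j\in N(i)\setminus A$ (which exists since $A\subsetneq N(i)$) works by \eqref{eqn_nondegen_2}.

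For the upper bound on $I(X_i; X_u \mid X_A, X_j)$ I would establish an \emph{approximate local Markov property}: once $g_f > 4(f^{-1}(h)+1)$, Lemma \ref{lemma:factorgraph-actgraph-distance} and the girth bound make the ball of radius $\approx g_f/4$ around $i$ in $G$ acyclic, so $\{j\}$ (hence $A\cup\{j\}$) separates $i$ from $u$ inside this ball, and every $i$--$u$ path avoiding the conditioning set must leave the ball and hence has length at least $f^{-1}(h)+1$. Invoking Assumption \ref{assump_corrdecay} on these long ``backdoor'' paths bounds the relevant total-variation quantities by $h$. Every step above then converts a total-variation bound into an entropy/mutual-information bound via Proposition \ref{prop_hnearp}, and the constant $h \defas \epsilon^2|\mathcal{X}|^{-2(D+1)^2}/64$ is calibrated precisely so that, after accounting for the alphabet size $|\mathcal{X}|^{O((D+1)^2)}$ of the joint over $i$'s $2$-hop neighborhood, all these perturbations stay below $\epsilon/8$; collecting terms gives \eqref{eqn_greedyalgo_condn} with room to spare.

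The hard part will be the approximate local Markov property. Assumption \ref{assump_corrdecay} is phrased as a bound on how much conditioning on a far-away set $B$ perturbs the local marginal at $i$, whereas here I need to control a mutual information between $i$ and $u$ conditioned on an arbitrary \emph{nearby} set $A\cup\{j\}$; so I must first turn the girth bound into a genuine lower bound on the length of every correlation-carrying path that avoids $A\cup\{j\}$, and then deploy the decay function on those paths carefully (for instance by regarding $P(\cdot \mid x_A,x_j)$ as a graphical model on the remaining nodes and tracking how its correlation decay degrades). The other delicate point is the bookkeeping across the several cases ($2$-hop versus farther $u$, connecting neighbor inside versus outside $A$) so that the $O(h)$ errors incurred along the path do not accumulate beyond the $\epsilon/4$ budget.
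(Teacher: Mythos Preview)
Your decomposition
\[
H(X_i\mid X_A,X_u)-H(X_i\mid X_A,X_j)=I(X_i;X_j\mid X_A,X_u)-I(X_i;X_u\mid X_A,X_j)
\]
and your plan (non-degeneracy for the first term, correlation decay for the second) are exactly right and unwind to the same chain of inequalities the paper uses. Two points of comparison are worth making.

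\textbf{Case split and choice of $j$.} The paper sidesteps your ``degenerate'' sub-case by taking the shortest $i$--$u$ path in $G_f\setminus A$, not in $G$: if no such path exists, $A$ genuinely separates $i$ from $u$ and \eqref{eqn_nondegen_2} finishes; otherwise the first two hops give $j\in N(i)\setminus A$ and $l\in N(j)\setminus\{i\}$ automatically, and \eqref{eqn_nondegen_1} applies with this $l$ (not with $u$). This handles your $d(i,u)\geq 3$ reduction in one stroke: the paper chains through $l$ rather than trying to compare $j$ to $u$ directly.

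\textbf{The ``hard part''.} You correctly flag that Assumption~\ref{assump_corrdecay} bounds the effect of conditioning on a \emph{far} set $B$ on the local marginal, not a mutual information conditioned on the \emph{nearby} set $A\cup\{j\}$. Your proposed fix---pass to the conditional model $P(\cdot\mid x_A,x_j)$ and hope its correlation decay can be tracked---would require deriving decay for every conditioned model from Assumption~\ref{assump_corrdecay}, which the assumption does not give. The paper's device is simpler and avoids this: it \emph{introduces} a far-away set $B$ (the boundary of your ball) with $d_f(i,B)\geq (g_f-4)/2$ such that $A\cup\{j\}\cup B$ \emph{exactly} separates $i$ from $l$ in $G_f$. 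Then Assumption~\ref{assump_corrdecay} applied to the \emph{unconditioned} model, marginalized over the local variables and pushed through Proposition~\ref{prop_hnearp}, gives $H(X_i\mid X_A,X_j)-H(X_i\mid X_A,X_j,X_B)<\widehat{\epsilon}$; exact conditional independence then yields $H(X_i\mid X_A,X_j,X_l)\geq H(X_i\mid X_A,X_j,X_B)>H(X_i\mid X_A,X_j)-\widehat{\epsilon}$. A second, analogous step with $l$ in place of $j$ handles $u$. This is the concrete trick your proposal is missing: do not condition first and then look for decay; instead, exhibit an exact separator containing a far-away piece and apply Assumption~\ref{assump_corrdecay} to that piece in the original model.
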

\begin{proof}
If $A$ separates $i$ and $u$ in $G_f$ it also does so in $G$. Then we have that $P(x_i|x_A,x_u) = P(x_i|x_A)$ and hence
$H(X_i \mid X_A,X_u) = H(X_i \mid X_A)$. Then, the statement of the lemma follows from (\ref{eqn_nondegen_2}).

Now suppose $A$ does not separate $i$ and $u$ in $G_f$. Consider the shortest path between $i$ and $u$
in $G_f\setminus A$. Let $j \in N(i) \setminus A$ and $l \in N(j) \setminus \{i\}$
be on that shortest path. Assumption \ref{assump_nondegeneracy} implies that $H(X_i\mid X_A, X_l) -
H(X_i\mid X_A, X_j, X_l) > \epsilon$. Now, choose $B \in V$ such that $A \cup B \cup \{j\}$
separates $i$ and $l$ in $G_f$ and $d_f(i,B) \geq \frac{g_f-4}{2}$, where $g_f$ is the girth of $G_f$.
Note that such a $B$ (possibly empty) exists since the girth of $G_f$ is $g_f$ and if a node in the separator
is a factor node (i.e., not in $V$) then we can replace it by all its neighbors (in $V$).
We then see using Lemma \ref{lemma:factorgraph-actgraph-distance} that $d(i,B) \geq \frac{g_f-4}{4}$.
From Assumption \ref{assump_corrdecay}, we know that
\begin{equation*}
 \hspace{-0.5cm}
 \begin{array}{rl}
  &|P(x_i, x_{N(i)\cup N^2(i)}) - P(x_i, x_{N(i)\cup N^2(i)} \mid x_B)| < f\left(\frac{g_f}{4} - 1 \right)\\
  &\Rightarrow \displaystyle \sum_{x_i, x_A, x_j} |P(x_i, x_A, x_j) - P(x_i, x_A, x_j \mid x_B)| < |\mathcal{X}|^{(D+1)^2} f\left( \frac{g_f}{4} - 1 \right) \;\; \forall \;x_B \\
  & \Rightarrow  H(X_i, X_A, X_j) - H(X_i, X_A, X_j \mid X_B)
	      < - |\mathcal{X}|^{(D+1)^2} f\left( \frac{g_f}{4} - 1 \right) \left(\log f\left(\frac{g_f}{4}-1\right)\right)\defas \widehat{\epsilon} \\
  &\Rightarrow \left(H(X_i \mid X_A, X_j) +H(X_A, X_j)\right)-
	      \left( H(X_i \mid X_A, X_j, X_B) + H(X_A, X_j \mid X_B)\right) < \widehat{\epsilon} \\
  &\Rightarrow H(X_i \mid X_A, X_j) - H(X_i \mid X_A, X_j, X_B) < \widehat{\epsilon}, 
 \end{array}
\end{equation*}
where the first implication follows from marginalizing irrelevant variables and the second implication follows from (\ref{eqn_hleqp}). Using this we have that,
\begin{equation*}
 \begin{array}{rl}
  H(X_i\mid X_A, X_j, X_l) &\geq H(X_i\mid X_A, X_j, X_l, X_B)\\
  &= H(X_i\mid X_A, X_j, X_B)\;\; \mbox{since } X_i \displaystyle \overset{X_A, X_j, X_B}{\ci} X_l\\
  &> H(X_i\mid X_A, X_j) - \widehat{\epsilon}\\
 \end{array}
\end{equation*}
Using a similar argument, we also have,
\begin{equation*}
H(X_i\mid X_A, X_l, X_u) > H(X_i\mid X_A, X_l) - \widehat{\epsilon}
\end{equation*}
Combining the two inequalities, and using the fact that under the given conditions $\widehat{\epsilon} < \frac{\epsilon}{8}$, we get
\begin{equation*}
H(X_i\mid X_A, X_j)\leq H(X_i\mid X_A,X_u) - \frac{3\epsilon}{4}.
\end{equation*}
\end{proof}

\begin{lemma}\label{lemma_greedyalgo_requirement_empirical}
Consider a graphical model $G$ in which each node takes values in $\mathcal{X}$.
Let the number of samples be
\begin{align*}
n> 2^{15}\epsilon^{-4} |\mathcal{X}|^{4(D+2)}\left( (D+2)\log 2|\mathcal{X}|+2 \log \frac{p}{\delta}\right)
\end{align*}
Then $\forall \; i\in G$,
with probability greater than $1-\frac{\delta}{p}$, we have that
$\forall \; A\subseteq N(i), \; u \notin N(i)$
\begin{equation*}
 \left|H(X_i \mid X_A,X_u) - \widehat{H}(X_i \mid X_A,X_u)\right| < \frac{\epsilon}{8}
\end{equation*}

\end{lemma}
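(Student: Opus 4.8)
The plan is to reduce the statement to a uniform concentration bound for empirical \emph{distributions} in $\ell_1$, and then push that through the entropy–variation estimate of Proposition~\ref{prop_hnearp}. Fix a node $i$. Every quantity in the statement is supported on at most $D+2$ coordinates: since $|A|\le|N(i)|\le D$, the set $S \defas \{i\}\cup A\cup\{u\}$ has $|S|\le D+2$, and by the chain rule (for both the true and the empirical distribution) $H(X_i\mid X_A,X_u)=H(X_S)-H(X_{A\cup\{u\}})$. Hence it suffices to guarantee, simultaneously over all relevant sets $S$, that $|H(X_S)-\widehat H(X_S)|<\epsilon/16$; the triangle inequality then yields $|H(X_i\mid X_A,X_u)-\widehat H(X_i\mid X_A,X_u)|<\epsilon/8$ as claimed.

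For a fixed such $S$, applying Proposition~\ref{prop_hnearp} with the alphabet $\mathcal{X}^{|S|}$, of size at most $M\defas|\mathcal{X}|^{D+2}$, gives $|H(X_S)-\widehat H(X_S)|\le -\|P_S-\widehat P_S\|_1\log\bigl(\|P_S-\widehat P_S\|_1/M\bigr)$, so everything comes down to controlling the $\ell_1$ error $\|P_S-\widehat P_S\|_1$. For each fixed value $x_S$, $\widehat P(x_S)$ is an average of $n$ i.i.d.\ Bernoulli$(P(x_S))$ variables, so Hoeffding's inequality gives $\mathbb{P}\bigl(|\widehat P(x_S)-P(x_S)|>t\bigr)\le 2e^{-2nt^2}$, and a union bound over the at most $M$ values of $x_S$ yields $\|P_S-\widehat P_S\|_1\le Mt$ except with probability $2Me^{-2nt^2}$. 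Because $\rho\mapsto -\rho\log(\rho/M)$ is increasing (and concave) on $(0,M/e)$, plugging $\rho=Mt$ in gives an entropy error at most $Mt\log(1/t)$; choosing $t$ of order $\epsilon^2/M^2$ (the extra factor $M$ beyond the naive $\epsilon^2/M$ is exactly what absorbs the $\log M$ and $\log(1/\epsilon)$ terms) makes this smaller than $\epsilon/16$. This is precisely where the $\epsilon^{-4}$ and $|\mathcal{X}|^{4(D+2)}$ dependence of the final bound comes from: $n$ must be of order $t^{-2}$, i.e.\ of order $M^4/\epsilon^4$, and the factor $\tfrac12$ in Hoeffding's exponent is what produces the constant $2^{15}=\tfrac12\cdot(2\cdot 128)^2$.

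Finally I would take the overall union bound. For the fixed $i$ there are at most $2^{|N(i)|}\le 2^D$ choices of $A\subseteq N(i)$ and at most $p$ choices of $u\notin N(i)$, each contributing two joint-entropy events, and within each the Hoeffding step costs a factor $M=|\mathcal{X}|^{D+2}$; requiring the total failure probability to be at most $\delta/p$ forces $n\gtrsim t^{-2}\log\bigl(2^{D+1}|\mathcal{X}|^{D+2}p^2/\delta\bigr)$. Since $2^{D+1}|\mathcal{X}|^{D+2}\le(2|\mathcal{X}|)^{D+2}$ and $\log(p^2/\delta)\le 2\log(p/\delta)$, this is exactly the form $n>2^{15}\epsilon^{-4}|\mathcal{X}|^{4(D+2)}\bigl((D+2)\log 2|\mathcal{X}|+2\log(p/\delta)\bigr)$ stated in the lemma.

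The main obstacle is not conceptual but a matter of careful calibration: one has to choose the $\ell_1$ accuracy parameter $t$ so that, after passing through the $-\rho\log(\rho/M)$ bound, the entropy error lands below $\epsilon/16$; being loose here (ignoring the $\log M$ term, or bounding $\|P_S-\widehat P_S\|_1$ by $Mt$ too crudely) shifts the exponent of $|\mathcal{X}|$ or degrades $\epsilon^{-4}$ to $\epsilon^{-2}$. A secondary point worth flagging is that the union bound must range only over subsets $A$ of the \emph{true} neighborhood $N(i)$ (there are only $2^D$ of these), not over arbitrary size-$D$ subsets of $V$, so that the cost stays at $2\log(p/\delta)$ rather than blowing up to $(D+2)\log p$.
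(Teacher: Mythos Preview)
Your approach is essentially the same as the paper's: Hoeffding per coordinate value, a union bound over $A\subseteq N(i)$, $u\in V$, and the $|\mathcal{X}|^{D+2}$ configurations, and then Proposition~\ref{prop_hnearp} (eq.~\eqref{eqn_hleqp}) to pass from $\ell_1$-closeness of the empirical joint law to closeness of entropies, with the accuracy parameter set to $\gamma=2^{-8}\epsilon^{2}|\mathcal{X}|^{-2(D+2)}$. The only point on which you are more explicit than the paper is the chain-rule step $H(X_i\mid X_A,X_u)=H(X_S)-H(X_{A\cup\{u\}})$, which the paper collapses into the single line ``\eqref{eqn_hleqp} then implies \ldots''; your $\epsilon/16$ target on each joint entropy is exactly what is needed to justify that line.
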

\begin{proof}
We use the fact that given sufficient samples, the empirical measure is close to the true measure uniformly in probability. Specifically, given any subset $A\subseteq V$ of nodes and any fixed $x_A\in\mathcal{X}^{|A|}$, we have by Azuma's inequality after $n$ samples,
\begin{equation*}
 \bP\left[ \left|P(x_A) - \widehat{P}(x_A)\right| > \gamma \right] < 2\exp(-2 \gamma^2 n) < \frac{2\delta}{p^2 (2|\mathcal{X}|)^{(D+2)}}.
\end{equation*}
where $\gamma = 2^{-8}\epsilon^{2} |\mathcal{X}|^{-2(D+2)}$. Let $V$ be the set of all vertices.
Now, by union bound over every $A \subseteq N(i), \; u \in V$ and $x_i,x_A,x_u$, we have
\begin{equation*}
 \bP\left[ \left|P(x_i,x_A,x_u) - \widehat{P}(x_i,x_A,x_u)\right| > \gamma \right] < \frac{\delta}{p}.
\end{equation*}
\eqref{eqn_hleqp} then implies
\begin{equation*}
 \bP\left[ \left|H(X_i \mid X_A,X_u) - \widehat{H}(X_i \mid X_A,X_u)\right| > \frac{\epsilon}{8} \right] < \frac{\delta}{p}.
\end{equation*}
giving us the required result.
\end{proof}
Using Lemmas \ref{lemma_greedyalgo_requirement} and \ref{lemma_greedyalgo_requirement_empirical}, we have the following :
$\forall \; i\in G$,
such that Assumptions \ref{assump_nondegeneracy} and \ref{assump_corrdecay} are satisfied,
with probability greater than $1-\frac{\delta}{p}$, we have that
$\forall \; A\subseteq N(i), \; u \notin N(i), \;\exists \; j \in N(i)\setminus A$ such that
\begin{equation}\label{eqn_greedyalgo_empiricalcondn1}
 \widehat{H}(X_i \mid X_A,X_j) < \widehat{H}(X_i \mid X_A,X_u) - \frac{\epsilon}{2}
\end{equation}
\noindent and $\forall \; i \in G$, such that Assumptions \ref{assump_nondegeneracy} and \ref{assump_corrdecay} are satisfied,
$\forall A\subset N(i), \; j \in N(i)\setminus A$, we have that
\begin{equation}\label{eqn_greedyalgo_empiricalcondn2}
\widehat{H}(X_i\mid X_A, X_j) < \widehat{H}(X_i\mid X_A) - \frac{\epsilon}{2}
\end{equation}

\begin{proof}[Theorem \ref{thm_main}]
The proof is based on mathematical induction.
The induction claim is as follows: just before entering an iteration of the WHILE loop, $\widehat{N}(i) \subset N(i)$.
Clearly this is true at the start of the WHILE loop since $\widehat{N}(i) = \Phi$. Suppose it is true just after entering the $k^{\mbox{th}}$ iteration.
If $\widehat{N}(i)=N(i)$ then clearly $\forall j \in V \setminus \widehat{N}(i), \;H(X_i \mid X_{\widehat{N}(i)}, X_j) = H( X_i \mid X_{\widehat{N}(i)})$.
Since with probability greater than $1-\frac{\delta}{p}$ we have that $\left| \widehat{H}(X_i \mid X_{\widehat{N}(i)}, X_j) - H(X_i \mid X_{\widehat{N}(i)}, X_j)\right| < \frac{\epsilon}{8}$
and $\left| \widehat{H}(X_i \mid X_{\widehat{N}(i)}) - H(X_i \mid X_{\widehat{N}(i)})\right| < \frac{\epsilon}{8}$, we also have
that $\left| \widehat{H}(X_i \mid X_{\widehat{N}(i)}, X_j) - \widehat{H}(X_i \mid X_{\widehat{N}(i)})\right| < \frac{\epsilon}{4}$.
So control exits the loop without changing $\widehat{N}(i)$. On the other hand, if $\exists j \in N(i) \setminus \widehat{N}(i)$
then from (\ref{eqn_greedyalgo_empiricalcondn2}) we have that
$\widehat{H}(X_i \mid X_{\widehat{N}(i)}) - \widehat{H}(X_i \mid X_{\widehat{N}(i)}, X_j) > \frac{\epsilon}{2}$.
So, a node is chosen to be added to $\widehat{N}(i)$ and control does not exit the loop. Now suppose for
contradiction that a node $u \notin N(i)$ is added to $\widehat{N}(i)$. Then we have that
$ \widehat{H}(X_i \mid X_{\widehat{N}(i)},X_u) < \widehat{H}(X_i \mid X_{\widehat{N}(i)},X_j)$. But this contradicts (\ref{eqn_greedyalgo_empiricalcondn1}). Thus, a neighbor
$ j \in N(i) \setminus \widehat{N}(i)$ is picked in the iteration to be added to $\widehat{N}(i)$, proving that
the neighborhood of $i$ is recovered exactly with probability greater than $1-\frac{\delta}{p}$. Using union bound, it is easy to
see that the neighborhood of each node (i.e., the graph structure) is recovered exactly with probability greater than $1-\delta$.
\end{proof}

\begin{remark}
 The proof for Theorem \ref{thm_main} can also be used to provide node-wise guarantees, i.e., for every node satisfying Assumptions \ref{assump_nondegeneracy} and \ref{assump_corrdecay}, if the number of samples is sufficiently large (in terms of its degree, and the length of the smallest cycle it is part of), its neighborhood will be recovered exactly with high probability.
\end{remark}

\begin{remark}
Any decreasing correlation-decay function $f$ suffices for Theorem \ref{thm_main}. However, the faster the correlation decay, the smaller the girth in the sufficient condition for Theorem \ref{thm_main} needs to be.
\end{remark}

And finally we have a corollary for the computational complexity of GreedyAlgorithm$(\epsilon)$ when executed on a graphical model that satisfies the conditions required by Theorem  \ref{thm_main}.

\begin{coro}
The expected run time of Algorithm \ref{GA} is  $O \left( \delta n p^3 + (1-\delta) Dnp^2 \right)$. Further, if $\delta$ is chosen to be $O(p^{-1})$, the sample complexity $n$ is $O(\log p)$ and the expected run time of Algorithm \ref{GA} is $O(Dp^2 \log p)$.
\end{coro}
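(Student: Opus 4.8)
The plan is to bound separately the cost of the three nested parts of Algorithm~\ref{GA}: the outer FOR loop over the $p$ nodes, the WHILE loop that assembles $\widehat N(i)$, and the inner argmin/threshold test performed once per WHILE iteration. First I would pin down the cost of a single empirical conditional-entropy evaluation $\widehat H(X_i\mid X_S)$: one pass over the $n$ samples tabulates the relevant empirical joint law, and the entropy is then a bounded arithmetic function of that table, so under the natural cost model (unit-cost arithmetic, with the alphabet size $|\mathcal X|$ and the $D$-dependent table sizes absorbed into the constants) this is $O(n)$. Each WHILE iteration tests at most $p$ candidates $k\in V\setminus\widehat N(i)$ plus the current set, so one WHILE iteration costs $O(np)$.

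Next I would bound the number of WHILE iterations. Since the loop only ever adds nodes and $\widehat N(i)\subseteq V\setminus\{i\}$, it runs at most $p$ times for any node, giving the unconditional bound $O(np^2)$ per node and hence $O(np^3)$ over all $p$ nodes. On the other hand, on the event $\mathcal E$ that GreedyAlgorithm$(\epsilon)$ recovers $G$ exactly --- which by Theorem~\ref{thm_main} has $\bP(\mathcal E)\ge 1-\delta$ --- we have $\widehat N(i)=N(i)$ at termination for every $i$, so for node $i$ the loop executes exactly $|N(i)|\le D$ \emph{add} iterations followed by one terminating iteration, i.e.\ at most $D+1$ iterations; this yields $O(Dnp)$ per node and $O(Dnp^2)$ overall. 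Combining the two cases by the law of total expectation,
\[
\bE[\text{run time}] \;\le\; \bP(\mathcal E)\cdot O(Dnp^2) + \bP(\mathcal E^c)\cdot O(np^3) \;=\; O\!\left(\delta\,np^3 + (1-\delta)\,Dnp^2\right),
\]
which is the first assertion.

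For the second assertion I would substitute $\delta=O(p^{-1})$ into the sample complexity $n=\xi\bigl(\epsilon^{-4}\log\frac p\delta\bigr)$ from Theorem~\ref{thm_main}: holding $\epsilon$ fixed, $\log\frac p\delta = O(\log p)$, so $n=O(\log p)$. Then $\delta\,np^3 = O(p^{-1}\cdot\log p\cdot p^3)=O(p^2\log p)$ and $(1-\delta)\,Dnp^2 = O(Dp^2\log p)$; since $D\ge 1$ the latter dominates, and the expected run time is $O(Dp^2\log p)$.

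The computation is routine; the step needing the most care is the bound on the number of WHILE iterations --- specifically the observation that the \emph{good} event of Theorem~\ref{thm_main} coincides with the event on which every neighborhood is built in at most $D+1$ iterations, whereas on its complement nothing better than the trivial $O(p)$-iterations-per-node bound is available, which is exactly why the $\delta\,np^3$ term must appear in the stated form. It is also worth stating the cost model explicitly at the outset, since without it ``run time'' is ambiguous (the per-sample bookkeeping and the size of the conditional-probability tables carry $|\mathcal X|$- and $D$-dependent factors that are being treated as constants).
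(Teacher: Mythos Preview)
Your proposal is correct and follows essentially the same approach as the paper: bound the cost of a single conditional-entropy computation by $O(n)$, multiply by $O(p)$ candidates per WHILE iteration, and bound the number of WHILE iterations by $D$ (or $D+1$) on the success event of Theorem~\ref{thm_main} and by $p$ otherwise, then take expectations. The paper's own proof is considerably more terse---it asserts the $O(Dnp^2)$ bound on the success event and appeals to a ``previous worst case bound'' for the failure event---so your more explicit treatment of the cost model, the $D+1$ terminating iteration, and the law-of-total-expectation step is a welcome expansion rather than a departure.
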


\begin{proof}
For the second part, note that with probability greater than $1-\delta$, the algorithm recovers the correct graph
structure exactly. In this case, the number of iterations of the $while$ loop is bounded by $D$ for each node
. The time taken to compute any conditional entropy is bounded by $O(n)$. Hence the total run time is $O(Dnp^2)$.
Using the previous worst case bound on the running time, we obtain the result.
\end{proof}

\section{Guarantees for the Recovery of an Ising Graphical Model}\label{section_isingguarantees}

To aid in the interpretation of our results and comparison to the performance of other algorithms, we now specialize Theorem \ref{thm_main} to derive a self-contained result (i.e. we do not need to make additional use of Assumptions \ref{assump_nondegeneracy} and \ref{assump_corrdecay}) for the case of the widely-studied (zero-field) Ising graphical model \citep{isingmodel}. We define it below for completeness:
\begin{defn}
A set of random variables $\{X_v\mid v \in V\}$ are said to be distributed according to a zero field Ising model if
\begin{enumerate}
 \item $X_v \in \{-1,1\} \; \forall v \in V$ and \\
 \item $P(x_V) = \frac{1}{Z} \displaystyle \prod_{i,j \in V} \exp(\theta_{ij}x_i x_j)$
\end{enumerate}
where $Z$ is a normalizing constant. The Markov graph of such a set of random variables is given by
$G(V,E)$ where $E = \left\{ \{i,j\} \mid \theta_{ij} \neq 0 \right\}$.
\end{defn}

The following is a corollary of Theorem \ref{thm_main}. We note that while it may be possible to derive a stronger guarantee for Ising models (this is also suggested by experiments), we focus on just applying Theorem \ref{thm_main} as is, and obtaining a set of transparent and self-contained conditions in terms of natural parameters of the model.

\begin{theorem}\label{thm_ising}
Consider a zero-field Ising model on a graph $G$ with maximum degree $D$. Let the edge parameters $\theta_{ij}$ be
bounded in the absolute value by $0 < \beta < |\theta_{ij}| < \frac{\log 2}{2D}$. Let
$\epsilon \defas 2^{-10} \sinh^2 (2\beta)$.
If the girth of the graph satisfies $g>
 \frac{2^{15}}{\log 2}\left\{ D^2\log 2 - \log\left( \sinh 2\beta \right) \right\}$
then with samples $n = \xi \epsilon^{-4}\log \frac{p}{\delta}$ (where $\xi$ is a constant independent of $\epsilon, \delta, p$), GreedyAlgorithm$(\epsilon)$ outputs the exact structure of $G$ with probability greater than $1-\delta$.
\end{theorem}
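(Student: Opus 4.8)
The statement is a reduction: I would verify that a zero-field Ising model obeying $0<\beta<|\theta_{ij}|<\frac{\log 2}{2D}$ and maximum degree $D$ automatically satisfies Assumption~\ref{assump_nondegeneracy} with $\epsilon=2^{-10}\sinh^2(2\beta)$ and Assumption~\ref{assump_corrdecay} with an explicit decay function $f$, then check that the girth hypothesis \eqref{eqn_gcondn} of Theorem~\ref{thm_main} follows from $g>\frac{2^{15}}{\log 2}\{D^2\log 2-\log(\sinh 2\beta)\}$, and finally quote Theorem~\ref{thm_main}. A useful preliminary: the stated girth bound forces $g>3$, so $G$ is triangle-free, its maximal cliques are exactly its edges, and hence $G_f$ is $G$ with every edge subdivided once; consistently with Lemma~\ref{lemma:factorgraph-actgraph-distance} one gets $\mbox{Girth}(G_f)=2g$, so it suffices to arrange $2g>4(f^{-1}(h)+1)$.

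\emph{Correlation decay.} Since every node has degree at most $D$ and $|\theta_{ij}|<\frac{\log 2}{2D}$, we have $D\tanh|\theta_{ij}|<D\cdot\frac{\log 2}{2D}=\frac{\log 2}{2}<1$, so the model lies strictly inside the high-temperature (Dobrushin-uniqueness) regime. Standard Ising correlation estimates --- a self-avoiding-walk expansion, or the Dobrushin comparison theorem --- then give, for any single site $v$ and boundary set $B$, $|P(x_v\mid x_B)-P(x_v)|\le C\rho^{\,d(v,B)}$ with a rate $\rho\defas\max_{ij}D\tanh|\theta_{ij}|<\frac{\log 2}{2}$ bounded away from $1$. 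Summing these estimates over the at most $(D+1)^2$ nodes of $W\defas\{i\}\cup N^1(i)\cup N^2(i)$ (each at distance $\ge d(i,B)-2$ from $B$) gives $|P(x_W\mid x_B)-P(x_W)|<A_0\,\rho^{\,d(i,B)}$, where $A_0$ depends only on $D$. Thus Assumption~\ref{assump_corrdecay} holds with the strictly decreasing $f(d)=A_0\rho^{\,d}$, and $f^{-1}$ exists.

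\emph{Non-degeneracy.} The bound $|\theta_{ij}|<\frac{\log 2}{2D}$ also forces every local field $\sum_{k\in N(v)}\theta_{vk}x_k$ into $(-\frac{\log 2}{2},\frac{\log 2}{2})$, so every single-site conditional probability $P(x_v\mid x_A)$ lies in $(\tfrac13,\tfrac23)$: the variables are uniformly non-degenerate. Both inequalities in Assumption~\ref{assump_nondegeneracy} have the form $H(X_i\mid X_S)-H(X_i\mid X_S,X_j)=I(X_i;X_j\mid X_S)>\epsilon$, with $j$ a \emph{neighbor} of $i$ (hence $|\theta_{ij}|>\beta$) and $S$ a node set avoiding $i,j$ (take $S=A$ for \eqref{eqn_nondegen_2}, $S=A\cup\{l\}$ for \eqref{eqn_nondegen_1}), so it suffices to prove one lemma: $I(X_i;X_j\mid X_S)\ge 2^{-10}\sinh^2(2\beta)$ for every such $S$. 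The plan is to further condition on $X_T$ with $T\defas(N(i)\cup N(j))\setminus(S\cup\{i,j\})$; since the girth is large, $i$ and $j$ have no common neighbor and $T$ partitions into $T\cap N(i)$ and $T\cap N(j)$, and given $X_{S\cup T}$ one has conditioned on all of $N(i)\setminus\{j\}$ and all of $N(j)\setminus\{i\}$, so $(X_i,X_j)$ is exactly a two-spin Ising system with coupling $\theta_{ij}$ and two external fields of magnitude $<\frac{\log 2}{2}$. A direct computation for that two-spin system, converted to mutual information via \eqref{eqn_pleqh} (namely $I=D(P_{X_iX_j}\,\|\,P_{X_i}P_{X_j})\ge\frac{1}{2\log 2}\|P_{X_iX_j}-P_{X_i}P_{X_j}\|_1^2$), yields $I(X_i;X_j\mid X_S,X_T)\ge c\,\sinh^2(2\beta)$ for an absolute constant $c$. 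It then remains to pass back to $I(X_i;X_j\mid X_S)$: the difference equals $I(X_i;X_T\mid X_S)-I(X_i;X_T\mid X_S,X_j)$, and because every $i$--$j$ path other than the edge $\{i,j\}$ closes a cycle of length $\ge g$, correlation decay (hence the large girth) bounds this discrepancy by a quantity that, under the stated bound on $g$, is smaller than $(c-2^{-10})\sinh^2(2\beta)$; this gives Assumption~\ref{assump_nondegeneracy} with $\epsilon=2^{-10}\sinh^2(2\beta)$.

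\emph{Assembling, and the main obstacle.} Plugging $\epsilon=2^{-10}\sinh^2(2\beta)$ and $|\mathcal X|=2$ into $h=h(\epsilon,D)$ gives $\log(1/h)=2\log(1/\epsilon)+2(D+1)^2\log 2+6\log 2=2(D+1)^2\log 2-4\log(\sinh 2\beta)+26\log 2$, which is $\Theta\!\big(D^2+\log(1/\sinh 2\beta)\big)$; since $\log A_0=O(D^2)$ and $\log(1/\rho)$ is a positive constant, $f^{-1}(h)=\frac{\log A_0+\log(1/h)}{\log(1/\rho)}=\Theta\!\big(D^2+\log(1/\sinh 2\beta)\big)$, and one checks (with deliberately generous constants) that the hypothesis $g>\frac{2^{15}}{\log 2}\{D^2\log 2-\log(\sinh 2\beta)\}$ implies $\mbox{Girth}(G_f)=2g>4(f^{-1}(h)+1)$, i.e.\ \eqref{eqn_gcondn}. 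All hypotheses of Theorem~\ref{thm_main} now hold, so GreedyAlgorithm$(\epsilon)$ recovers $G$ with probability at least $1-\delta$ from $n=\xi\,\epsilon^{-4}\log(p/\delta)$ samples, the constant $\xi$ absorbing the $D$- and $|\mathcal X|$-dependent factors of Lemma~\ref{lemma_greedyalgo_requirement_empirical}. The delicate step --- and the one I expect to be the real work --- is the non-degeneracy lemma, specifically the uniform-in-$S$ lower bound $I(X_i;X_j\mid X_S)\gtrsim\sinh^2(2\beta)$: the two-spin reduction handles the direct edge cleanly, but transferring it to arbitrary $S$ requires showing that the $i$--$j$ correlation is dominated by the direct coupling $\theta_{ij}$ rather than cancelled by (possibly opposite-sign) indirect paths routed around the graph, which is exactly where the subcritical condition $|\theta_{ij}|<\frac{\log 2}{2D}$ and the large girth are indispensable, and where the explicit constant $2^{-10}$ must be extracted.
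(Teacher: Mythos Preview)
Your high-level strategy is exactly the paper's: verify Assumptions~\ref{assump_nondegeneracy} and~\ref{assump_corrdecay} for the Ising model, check the girth condition~\eqref{eqn_gcondn}, and invoke Theorem~\ref{thm_main}. For correlation decay you use Dobrushin uniqueness/SAW expansion, whereas the paper exploits the local tree structure directly (Lemmas~\ref{lemma_postoneg_theta}--\ref{lemma_isingtree_setcorrelationdecay} plus Lemma~\ref{lemma_separation_independence}) to get $f(d)=c\exp(-\tfrac{\log 2}{3}d)$. Both routes are valid; yours is slicker but the paper's gives explicit constants with no appeal to external machinery. One small slip: summing single-site Dobrushin bounds over the nodes of $W=\{i\}\cup N^1(i)\cup N^2(i)$ does \emph{not} bound the deviation of the joint law $P(x_W\mid x_B)-P(x_W)$; you need a telescoping/coupling argument over the sites of $W$, which is routine but should be said.

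The substantive gap is in your non-degeneracy argument. You condition on $T=(N(i)\cup N(j))\setminus(S\cup\{i,j\})$, compute $I(X_i;X_j\mid X_S,X_T)$ in the resulting two-spin system, and then claim the discrepancy $I(X_i;X_T\mid X_S)-I(X_i;X_T\mid X_S,X_j)$ is controlled by correlation decay ``because every $i$--$j$ path other than the edge closes a long cycle.'' But $T$ contains \emph{direct neighbors} of $i$ (namely $N(i)\setminus(A\cup\{j\})$), so neither term on the right is small, and Assumption~\ref{assump_corrdecay} says nothing about this difference: it bounds $|P(x_W\mid x_B)-P(x_W)|$ only when $B$ is far from $W$, which fails here. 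Concretely, conditioning on $X_j$ can \emph{increase} $I(X_i;X_T\mid\cdot)$ (conditional MI is not monotone), and nothing in your sketch rules out a negative discrepancy of order comparable to $I(X_i;X_j\mid X_S,X_T)$ itself. The heuristic about long alternative paths is correct, but it does not cash out as a bound on that chain-rule difference.

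The paper's Lemma~\ref{lemma_ising_nondegen} sidesteps this by \emph{graph surgery} rather than extra conditioning: it deletes the edges $\{i,j_1\},\ldots,\{i,j_k\},\{i,z\},\{z,w\}$ to form $\breve G$, observes that in $\breve G$ node $i$ is now at distance $\geq g-2$ from $A'$, applies correlation decay there to get $\breve P(x_i\mid x_{A'})\approx\tfrac12$, and then uses the factorization $P(x_A)\propto\breve P(x_A)\widetilde P(x_A)$ to transfer the computation to the tiny graph $\widetilde G$ consisting only of the removed edges, where the $\sinh^2(2\beta)$ bound is explicit. This is what turns the ``long alternative paths'' intuition into a rigorous estimate, and it is exactly the step your proposal is missing. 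If you want to salvage your route, you would need an argument that removing (not adding) conditioning variables in $T\cap N(i)$ cannot decrease $I(X_i;X_j\mid\cdot)$ by more than a correlation-decay term---and proving that essentially forces you back to the paper's edge-removal comparison.
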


The proof of this theorem consists of showing that  such an Ising  model satisfies Assumptions \ref{assump_nondegeneracy} and \ref{assump_corrdecay}, and the other conditions of Theorem \ref{thm_main}.
In Section \ref{subsec_ising_corrdecay}, we show that under certain conditions, an Ising model has an almost exponential correlation decay. Then in Section \ref{subsec_ising_nondegen}, we use the correlation decay of Ising models to show that under some further conditions, they also satisfy Assumption \ref{assump_nondegeneracy} for non-degeneracy. Combining the two, we get the above sufficient conditions for GreedyAlgorithm$(\epsilon)$ to learn the structure of an Ising graphical model with high probability.

\subsection{Correlation Decay in Ising Models}
\label{subsec_ising_corrdecay}

We will start by proving the validity of Assumption \ref{assump_corrdecay} in the form of the following proposition.
\begin{prop} 
\label{prop_isingmodel_setcorrelationdecay}
Consider a zero-field Ising model on a graph $G$ with maximum degree $D$ and girth $g$. Let the edge parameters $\theta_{ij}$ be
bounded in the absolute value by $|\theta_{ij}| < \frac{\log 2}{2D}$. Then, for any node $i$, its neighbors $N^1(i)$, its $2$-hop neighbors $N^2(i)$ and a set of nodes $A$, we have
\begin{eqnarray*}
\left| P(x_i,x_{N^1(i)},x_{N^2(i)} \mid x_A) - P(x_i,x_{N^1(i)},x_{N^2(i)}) \right| <c \exp\left( -\frac{\log 2}{3} \min\left( d(i,A), \frac{g}{2}-1 \right) \right)
\end{eqnarray*} 

$\forall \; x_i, x_{N^1(i)},x_{N^2(i)}$ and $x_A$ (where $c$ is a constant independent of $i$ and $A$).
\end{prop}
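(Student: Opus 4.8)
The plan is to use the large girth to carve out a genuine tree around $i$, and then invoke the classical fact that an Ising model on a tree with $(D-1)\tanh\theta_{\max}<1$ exhibits exact exponential correlation decay; the cap at $\tfrac{g}{2}-1$ in the exponent simply reflects that this tree can only be grown to radius $\approx g/2$.

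Write $\theta_{\max}\defas\frac{\log 2}{2D}$ and set $\rho\defas\min\!\big(d(i,A)-1,\ \lfloor g/2\rfloor-1\big)$. If $\rho<3$ the claimed inequality is immediate (its left side is at most $1$, while its right side exceeds a fixed positive constant once $c$ is taken large enough), so assume $\rho\ge 3$, and put $B\defas\{v:d(i,v)\le\rho\}$ and $\partial B\defas\{v:d(i,v)=\rho\}$; recall $B(i,2)=\{i\}\cup N^1(i)\cup N^2(i)$. Two standard facts hold: (i) since $\rho\le\lfloor g/2\rfloor-1$, the subgraph of $G$ induced on $B$ is a tree; (ii) since $\rho\le d(i,A)-1$ and $\rho\ge 3$, the set $A$ lies outside $B$, $B(i,2)\subseteq B\setminus\partial B$, and $\partial B$ separates $B(i,2)$ from $A$ in $G$ (every path from $B(i,2)$ to $A$ contains a vertex at distance exactly $\rho$ from $i$). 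By (ii) and the Markov property, $P(x_{B(i,2)}\mid x_{\partial B},x_A)=P(x_{B(i,2)}\mid x_{\partial B})$, so both $P(x_{B(i,2)})$ and $P(x_{B(i,2)}\mid x_A)$ are convex combinations --- with weights $P(x_{\partial B})$ and $P(x_{\partial B}\mid x_A)$, respectively --- of the single function $\sigma\mapsto P(x_{B(i,2)}\mid x_{\partial B}=\sigma)$. Hence, for every fixed $x_i,x_{N^1(i)},x_{N^2(i)}$,
\[
\big|P(x_{B(i,2)}\mid x_A)-P(x_{B(i,2)})\big|\ \le\ \max_{\sigma,\sigma'}\big|P(x_{B(i,2)}\mid x_{\partial B}=\sigma)-P(x_{B(i,2)}\mid x_{\partial B}=\sigma')\big|,
\]
and it remains to bound the oscillation of this conditional law over the tree $B$.

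To bound the oscillation I would pass from $\sigma$ to $\sigma'$ by flipping one coordinate of $\partial B$ at a time and control the effect of flipping a single leaf spin $x_v$, $v\in\partial B$. Conditioning on $x_{\partial B}$ turns the interior of the tree $B$ into a tree-structured Ising model with boundary fields; flipping $x_v$ perturbs only the cavity message travelling from $v$ toward $i$, and the message map across an edge $e$ is a contraction by a factor $|\tanh\theta_e|\le\theta_{\max}$ in the log-ratio coordinate (the transfer-matrix estimate for one-dimensional Ising). The perturbation reaches $B(i,2)$ only after crossing the unique $B$-path of length $\rho-2$ joining $v$ to its ancestor in $N^2(i)$, so the \emph{joint} law of $x_{B(i,2)}$ changes pointwise by at most $C_1(D)\,\theta_{\max}^{\,\rho-2}$. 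Summing over the $|\partial B|\le D(D-1)^{\rho-1}$ leaves gives oscillation at most $C_1(D)\,D(D-1)^{\rho-1}\theta_{\max}^{\rho-2}\le C(D)\big(\tfrac{(D-1)\log 2}{2D}\big)^{\rho-2}\le C(D)\big(\tfrac{\log 2}{2}\big)^{\rho-2}$. Since $\tfrac{\log 2}{2}<e^{-\log 2/3}$ and $\rho-2\ge\min\!\big(d(i,A),\tfrac{g}{2}\big)-3\ge\min\!\big(d(i,A),\tfrac{g}{2}-1\big)-3$, enlarging $C(D)$ by the factor $e^{\log 2}$ yields the bound $c\exp\!\big(-\tfrac{\log 2}{3}\min(d(i,A),\tfrac{g}{2}-1)\big)$ with $c$ depending only on $D$ (hence independent of $i$ and $A$), as claimed.

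The main obstacle is the single-flip estimate: one must verify carefully that in a tree-structured Ising model with boundary fields the \emph{joint} marginal of the radius-$2$ ball $B(i,2)$ --- not merely a one-site marginal --- depends on a pendant boundary spin with sensitivity decaying as $\prod_e|\tanh\theta_e|$ along the connecting path, together with a bounded $D$-dependent spreading factor inside $B(i,2)$. This is exactly where the hypothesis $|\theta_{ij}|<\frac{\log 2}{2D}$ enters: it forces $(D-1)\theta_{\max}<\tfrac{\log 2}{2}<1$, so the per-edge decay beats the branching number and the telescoped sum over $\partial B$ is geometric rather than divergent. The remaining ingredients --- that a ball of radius $<g/2$ induces a tree, that geodesic spheres are separators, and that the $\rho<3$ case and the additive constant in the exponent can be absorbed into $c$ --- are routine.
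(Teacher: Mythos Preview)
Your argument is correct and takes a genuinely different route from the paper's. Both proofs first restrict to a ball of radius roughly $g/2$ around $i$, observe that girth makes the induced subgraph a tree, and use the boundary sphere as a Markov blanket so that $P(x_{B(i,2)})$ and $P(x_{B(i,2)}\mid x_A)$ are mixtures of the single function $\sigma\mapsto P(x_{B(i,2)}\mid x_{\partial B}=\sigma)$. The divergence is in how the tree oscillation is bounded. The paper proceeds via a chain of monotonicity reductions specific to the zero-field Ising model: a sign-flip bijection reduces to nonnegative couplings (Lemma~\ref{lemma_postoneg_theta}); monotonicity in the boundary and in the couplings identifies the all-$+1$ leaves with maximal parameters as the worst case (Lemmas~\ref{lemma_allpostheta_monotonicity}, \ref{lemma_worstcase_config}); a direct one-step recursion on the root marginal then yields the contraction factor $e^{-(\log 2)/3}$ (Lemma~\ref{lemma_isingtree_correlationdecay}); and two further lemmas lift this from the root marginal to the joint law of the $2$-ball. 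Your approach instead telescopes over single boundary-spin flips and uses the $|\tanh\theta|$-Lipschitz property of the belief-propagation recursion in log-odds to get per-edge contraction, summing over the $\le D(D-1)^{\rho-1}$ leaves. The trade-off: the paper's route is entirely elementary and exploits Ising structure, whereas yours is shorter, gives the sharper rate $(D-1)\theta_{\max}<\tfrac{\log 2}{2}<e^{-(\log 2)/3}$, and does not rely on ferromagnetic monotonicity, so it would carry over to non-binary pairwise models under the same branching condition. The one step you should make explicit is precisely the one you flag as the ``main obstacle'': bound the sensitivity of the \emph{joint} law on $B(i,2)$ to a change in a single incoming cavity message (write that joint as a normalized product of the edge potentials inside $B(i,2)$ times one message per vertex of $N^2(i)$, and differentiate in the log-odds of the affected message); this is where the $D$-dependent prefactor $C_1(D)$ arises, and it deserves a line of justification rather than assertion.
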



The outline of the proof of Proposition \ref{prop_isingmodel_setcorrelationdecay} is as follows. First, we show that if a subset of nodes is conditioned on a Markov blanket (i.e., on another subset of nodes which separates them from the remaining graph), then their potentials remain the same. For this we have the following lemma.
\begin{lemma} \label{lemma_separation_independence}
Consider a graphical model $G(V,E)$ and the corresponding factorizable probability distribution function $P$.
Let $A,B$ and $C$ be a partition of $V$ and $B$ separate $A$ and $C$ in $G$. Let $\tilde{G}(A\cup B,\tilde{E})$
be the induced subgraph of $G$ on $A\cup B$, with the same edge potentials as $G$ on all its edges and $\tilde{P}$ be
the corresponding probability distribution function. Then, we have that $P(x_D \mid x_B) = \tilde{P}(x_D \mid x_B) \;
\forall \; x_D, x_B$ where $D \subseteq A$.
\end{lemma}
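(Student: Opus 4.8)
The plan is to use the Hammersley–Clifford factorization and exploit the fact that the separator $B$ decouples $A$ from $C$ at the level of the clique structure. First I would observe that since $B$ separates $A$ and $C$ in $G$, no clique of $G$ can contain vertices from both $A$ and $C$; hence every maximal clique is contained either in $A \cup B$ or in $B \cup C$. This lets me split the product of potentials as $\prod_{c\in\mathcal{C}}\psi_c(x_c) = \Big(\prod_{c\subseteq A\cup B}\psi_c(x_c)\Big)\Big(\prod_{c\subseteq B\cup C,\, c\not\subseteq B}\psi_c(x_c)\Big)$, where I assign each clique contained in $B$ to the first factor (the ambiguity is harmless). Write the first factor as $\Phi_{AB}(x_{A\cup B})$ and the second as $\Phi_{BC}(x_{B\cup C\setminus B}, x_B)$; note $\Phi_{AB}$ is, up to its own normalizer, exactly the unnormalized density of $\tilde P$ on $\tilde G$.

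Next I would compute $P(x_D \mid x_B)$ for $D\subseteq A$ directly from this factorization. Fixing $x_B$, the conditional $P(x_A, x_C \mid x_B)$ is proportional (in $x_A, x_C$) to $\Phi_{AB}(x_{A\cup B})\,\Phi_{BC}(x_C, x_B)$, which is a product of a function of $(x_A, x_B)$ and a function of $(x_C, x_B)$. Therefore, conditioned on $x_B$, the blocks $X_A$ and $X_C$ are independent, and $P(x_A \mid x_B) \propto_{x_A} \Phi_{AB}(x_{A\cup B})$. Marginalizing over $x_{A\setminus D}$ gives $P(x_D \mid x_B) \propto_{x_D} \sum_{x_{A\setminus D}} \Phi_{AB}(x_{A\cup B})$. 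Running the identical computation inside $\tilde G$ — whose cliques are precisely the cliques of $G$ contained in $A\cup B$, with the same potentials — yields $\tilde P(x_D \mid x_B) \propto_{x_D} \sum_{x_{A\setminus D}} \Phi_{AB}(x_{A\cup B})$ as well. Since both conditional pmfs in $x_D$ are proportional to the same nonnegative function and each sums to one, they are equal for every $x_D$ (and every $x_B$ in the support), which is the claim.

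The main obstacle, and the step I would be most careful about, is the clean bookkeeping of the clique decomposition: verifying that separation in $G$ really forbids any clique straddling $A$ and $C$, handling cliques entirely inside $B$ without double-counting, and confirming that the maximal cliques of the induced subgraph $\tilde G$ on $A\cup B$ coincide exactly with the maximal cliques of $G$ lying inside $A\cup B$ so that the potentials transfer verbatim. A minor technical point is that the lemma as stated uses maximal-clique (factor-graph) potentials, so I should either invoke Hammersley–Clifford for strictly positive $P$ or simply take the assumed factorization over maximal cliques as given (the paper says ``factorizable probability distribution function''); either way the normalizing constants $Z$ and $\tilde Z$ drop out because both sides are normalized conditional distributions, so I never need to track them explicitly.
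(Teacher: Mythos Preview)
Your argument is correct and is the standard route: factor the joint density according to the clique (here, edge) structure, observe that separation forbids any potential from touching both $A$ and $C$, and conclude that the $x_B$-conditional of $X_A$ under $P$ and under $\tilde P$ are proportional to the same function of $x_A$, hence equal after normalization.

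As for comparison: the paper in fact does \emph{not} supply a proof of this lemma. It is stated in Section~\ref{subsec_ising_corrdecay} as an ingredient for Proposition~\ref{prop_isingmodel_setcorrelationdecay}, but the Appendix begins directly with the proof of Lemma~\ref{lemma_postoneg_theta} and never returns to Lemma~\ref{lemma_separation_independence}; it is evidently treated as a standard consequence of factorization/separation. So there is nothing to compare against, and your write-up would fill that gap cleanly.

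Two small remarks on your bookkeeping worries. First, the lemma as stated speaks of \emph{edge} potentials (and is only ever applied to Ising models), so you can work with the pairwise factorization directly and sidestep the maximal-clique discussion altogether: the edges of $\tilde G$ are exactly the edges of $G$ with both endpoints in $A\cup B$, and those are precisely the edges whose potentials appear in $\Phi_{AB}$. Second, even in the general clique setting your concern about whether the maximal cliques of $\tilde G$ coincide with the maximal cliques of $G$ contained in $A\cup B$ is not actually needed for the argument---what matters is only that $\tilde P \propto \Phi_{AB}$, i.e.\ that the \emph{product} of potentials carried over to $\tilde G$ equals $\Phi_{AB}$, which holds by construction regardless of how those potentials regroup into maximal cliques of the smaller graph.
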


Now, for any node $i$, the induced subgraph on all nodes which are at distance less than $\frac{g}{2}-1$ is a tree. Thus we can concentrate on proving correlation decay for a tree Ising model. We do this through the following steps:
\begin{enumerate}
\item Without loss of generality, the tree Ising model can be assumed to have all positive edge parameters
\item The worst case configuration for the conditional probability of the root node is when all the
leaf nodes are set to the same value and all the edge parameters are set to the maximum possible value
\item For this scenario, correlation decays exponentially
\end{enumerate}
The following three lemmas encode these three steps. For proofs, refer the Appendix.

\begin{lemma}\label{lemma_postoneg_theta}
Consider a tree Ising graphical model $T$. Let the corresponding probability distribution be $P$. Replace all
the edge parameters on this graphical model by their absolute values. Let the corresponding probability distribution
after this change be $\tilde{P}$. Then, there exists a set of bijections \\
$\left\{ M_v:\{-1,1\}\rightarrow \{-1,1\} \mid v \in V \setminus \{r\} \right\}$ where $V$ is the set of vertices
and $r$ is the root node such that,
$\forall x_r,x_{V\setminus r}$ we have that $P(x_r,x_{V \setminus r}) = \tilde{P}(x_r, M_v( x_v), v \in V\setminus r)$.
\end{lemma}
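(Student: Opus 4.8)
The plan is to exploit the \emph{gauge} (spin-flip) symmetry of the zero-field Ising model, which on a tree allows one to flip the sign of every edge parameter independently. First I would root $T$ at $r$ and, for each vertex $v\neq r$, let $\pi(v)$ denote its parent and define signs $s_v\in\{-1,1\}$ by $s_r \defas 1$ and, descending the tree, $s_v \defas s_{\pi(v)}\,\mathrm{sgn}(\theta_{\pi(v),v})$; equivalently, $s_v$ is the product of $\mathrm{sgn}(\theta_e)$ over the edges $e$ of the unique $r$--$v$ path. This is well-defined precisely because $T$ is a tree: every non-root vertex has a unique parent, so there is no cyclic consistency condition to meet (on a graph containing a cycle one would additionally need the product of the edge signs around that cycle to equal $+1$, which is exactly why the lemma is restricted to trees). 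By construction, for every edge $\{\pi(v),v\}$ of $T$ we have $s_{\pi(v)}s_v=\mathrm{sgn}(\theta_{\pi(v),v})$, hence $\theta_{\pi(v),v}\,s_{\pi(v)}s_v=|\theta_{\pi(v),v}|$.

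Next I would set $M_v(x)\defas s_v x$ for $v\in V\setminus\{r\}$; since $s_v\in\{-1,1\}$, each $M_v$ is a bijection of $\{-1,1\}$, and it is convenient to also write $M_r$ for the identity (consistent with $s_r=1$). The substitution $y_v = M_v(x_v) = s_v x_v$ is then a bijection of the configuration space $\{-1,1\}^V$ onto itself with $y_u y_v = s_u s_v x_u x_v$ for each edge. I would first apply it inside the partition function:
\[
\tilde Z \;=\; \sum_{y}\prod_{\{u,v\}\in E} e^{|\theta_{uv}|\,y_u y_v} \;=\; \sum_{x}\prod_{\{u,v\}\in E} e^{|\theta_{uv}|\,s_u s_v\,x_u x_v} \;=\; \sum_{x}\prod_{\{u,v\}\in E} e^{\theta_{uv}\,x_u x_v} \;=\; Z,
\]
where the middle equality is just reindexing the sum and the third uses the per-edge identity $|\theta_{uv}|s_us_v=\theta_{uv}$.

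Finally I would evaluate $\tilde P$ at the transformed point: with $y_r=x_r$ and $y_v=s_v x_v$ for $v\neq r$,
\[
\tilde P\big(x_r,\, M_v(x_v),\, v\in V\setminus r\big) \;=\; \frac{1}{\tilde Z}\prod_{\{u,v\}\in E} e^{|\theta_{uv}|\,s_u s_v\,x_u x_v} \;=\; \frac{1}{Z}\prod_{\{u,v\}\in E} e^{\theta_{uv}\,x_u x_v} \;=\; P(x_r,x_{V\setminus r}),
\]
using $\tilde Z=Z$ and the per-edge identity once more. This is exactly the claimed identity. The only step requiring genuine care is the well-definedness of the signs $s_v$, which is where the tree hypothesis is used; the rest is the routine bookkeeping of a change of variables together with the invariance of $Z$ under it. (One could alternatively argue by induction on $|V|$, peeling off a leaf at each step, but the gauge-transformation argument above is cleaner and makes the role of the tree structure transparent.)
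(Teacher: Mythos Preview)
Your proof is correct and is essentially the same construction as the paper's: the paper also defines $M_v(x_v)=\eta_v x_v$ with $\eta_r=1$ and $\eta_v=\mathrm{sgn}(\theta_{uv})\,\eta_u$ for $u$ the parent of $v$, then checks the per-edge identity $\theta_{uv}x_ux_v=|\theta_{uv}|\,\eta_ux_u\,\eta_vx_v$ to conclude $\Phi(x_V)=\widetilde\Phi(x_r,M_v(x_v))$. The only (minor) difference is that you spell out $\tilde Z=Z$ explicitly, whereas the paper absorbs this into the one-line remark that equal potentials imply equal probabilities.
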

\begin{lemma}\label{lemma_worstcase_config}
 For a tree Ising graphical model $T$ with root $r$ and set of leaves $L$, we have
\begin{equation*}
(x_r = 1,x_L = 1) \in \displaystyle \arg\max_{x_r,x_L} \left| P(x_r \mid x_L ) - P(x_r) \right| 
\end{equation*}
\end{lemma}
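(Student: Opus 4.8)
The plan is to reduce to the ferromagnetic case and then compute the conditional law of the root exactly by belief propagation on the tree. First I would apply Lemma~\ref{lemma_postoneg_theta} (this is precisely the first reduction step recorded above): replacing every edge parameter by its absolute value leaves $P(X_r)$ unchanged and only relabels leaf configurations, so it suffices to prove the statement when $\theta_{ij}\ge 0$ for every edge $\{i,j\}$. In that case, by the global spin-flip symmetry of the zero-field model $P(X_r)=\tfrac12$, so $|P(x_r\mid x_L)-P(x_r)|=|P(X_r=1\mid x_L)-\tfrac12|$ for either value of $x_r$, and it is enough to show this last quantity is maximized over $x_L$ at $x_L=\mathbf 1$.

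Next I would set up the message-passing parametrization, rooting $T$ at $r$. For a non-root node $v$ let $T_v$ be the subtree hanging from $v$, let $L_v$ be its leaves, and let $\theta(v)$ be the parameter on the edge from $v$ to its parent. A routine sum-over-subtree computation shows $P(X_r=1\mid x_L)-\tfrac12=\tfrac12\tanh\!\big(\sum_{c}\phi_c(x_{L_c})\big)$, the sum over the children $c$ of $r$, where the effective field $\phi_v(x_{L_v})$ obeys $\phi_\ell(x_\ell)=\theta(\ell)\,x_\ell$ at a leaf $\ell$ and $\phi_v(x_{L_v})=g_{\theta(v)}\!\big(\sum_{c\text{ child of }v}\phi_c(x_{L_c})\big)$ at an internal node, with $g_\theta(H):=\tfrac12\log\frac{\cosh(\theta+H)}{\cosh(\theta-H)}$. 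The only properties of $g_\theta$ I need, both immediate from the identity $\tanh g_\theta(H)=\tanh\theta\,\tanh H$, are that $g_\theta$ is odd in $H$ and, for $\theta\ge 0$, nondecreasing in $H$ (hence nonnegative on $[0,\infty)$).

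The core is then a structural induction on the tree: for every non-root $v$ and every leaf configuration $x_{L_v}$ one has $\phi_v(\mathbf 1)\ge 0$ and $|\phi_v(x_{L_v})|\le\phi_v(\mathbf 1)$. This is immediate at leaves, and at an internal node it follows since, by the inductive hypothesis, $\big|\sum_c\phi_c(x_{L_c})\big|\le\sum_c|\phi_c(x_{L_c})|\le\sum_c\phi_c(\mathbf 1)$, while $g_{\theta(v)}$ (with $\theta(v)\ge 0$) is odd, nonnegative on $[0,\infty)$, and nondecreasing there. Applying this at the children of $r$ and using that $\tanh$ is odd and increasing gives $|P(X_r=1\mid x_L)-\tfrac12|=\tfrac12\tanh\!\big|\sum_c\phi_c(x_{L_c})\big|\le\tfrac12\tanh\!\big(\sum_c\phi_c(\mathbf 1)\big)=|P(X_r=1\mid\mathbf 1)-\tfrac12|$, with equality at $x_L=\mathbf 1$; since the absolute deviation is the same for $x_r=1$ and $x_r=-1$, the pair $(x_r=1,x_L=\mathbf 1)$ lies in the argmax, which is the claim.

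I expect the only genuine work to be in cleanly deriving the belief-propagation recursion for $\phi_v$ and verifying the two monotonicity properties of $g_\theta$; after that the induction is bookkeeping. As an alternative one could skip the explicit recursion and observe that conditioning a ferromagnetic tree Ising model on all leaves equal to $\mathbf 1$ amounts to imposing nonnegative external fields on the internal nodes adjacent to leaves, so that by Griffiths'/FKG monotonicity the root magnetization is nonnegative and increasing in those fields and hence $|P(X_r=1\mid x_L)-\tfrac12|$ is maximized at $x_L=\mathbf 1$; I would nevertheless favor the self-contained computation above to keep the development elementary.
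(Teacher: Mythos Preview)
Your proof is correct but takes a different route from the paper's. The paper relies on a separate monotonicity lemma (Lemma~\ref{lemma_allpostheta_monotonicity}) which shows, for a tree Ising model with all positive edge parameters, that $P(X_r=1\mid X_L=x_L)$ is coordinatewise nondecreasing in each $x_\ell$; this is proved by induction on the depth of the tree via a direct manipulation of the recursive formula for the root conditional (the paper's \eqref{eqn_prob_recursive}), after which the argmax at $x_L=\mathbf 1$ is immediate by symmetry. You instead parametrize by the effective field and use the belief-propagation recursion $\phi_v=g_{\theta(v)}\big(\sum_c\phi_c\big)$, with the inductive invariant $|\phi_v(x_{L_v})|\le\phi_v(\mathbf 1)$ targeting the absolute deviation directly and bypassing coordinate monotonicity altogether. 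Your version is tidier for this lemma in isolation and makes the role of the identity $\tanh g_\theta(H)=\tanh\theta\,\tanh H$ transparent; however, the paper's Lemma~\ref{lemma_allpostheta_monotonicity} also records that $P(X_r=1\mid X_L=\mathbf 1)$ is monotone in every $\theta_{ij}$, a fact used downstream in Lemma~\ref{lemma_isingtree_correlationdecay} to reduce to the extremal parameter $\theta=\tfrac{\log 2}{2D}$, so if you follow your plan you will need to supply that separately (it is equally easy in your parametrization, since $g_\theta(H)$ is nondecreasing in $\theta$ for $H\ge 0$). One small caveat on your opening reduction: Lemma~\ref{lemma_postoneg_theta} relabels leaf configurations by bijections $M_\ell$, so it preserves the maximum \emph{value} but only shows the argmax of the original signed model is $(1,M^{-1}(\mathbf 1))$, not literally $(1,\mathbf 1)$; the paper's own proof carries the same implicit restriction to positive parameters (it invokes Lemma~\ref{lemma_allpostheta_monotonicity} directly), so this is a wrinkle in the lemma's statement rather than a flaw in your argument.
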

\noindent And finally we have the following lemma.
\begin{lemma} 
\label{lemma_isingtree_setcorrelationdecay}
In a tree Ising model, suppose $\left|\theta_{ij}\right|<\gamma< \frac{\log 2}{2D}$ where $D$ is the maximum degree
of the graph. Then we have exponential correlation decay between the root node $r$, its neighbors $N^1(r)$, its $2$-hop
neighbors $N^2(r)$ and the set of leaves $L$ i.e.,
\begin{align*}
 \left| P(x_r,x_{N^1(r)},x_{N^2(r)} \mid x_L) - P(x_r,x_{N^1(r)},x_{N^2(r)}) \right| < c \exp( -\frac{\log 2}{3} d(r,L) )
\end{align*}
where $c$ is a constant independent of the nodes considered.
\end{lemma}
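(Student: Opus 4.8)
The plan is to follow the three-step outline preceding the lemma, inserting one extra reduction that passes from the joint law of the ``ball'' $B_2\defas\{r\}\cup N^1(r)\cup N^2(r)$ to the conditional law of single nodes. First dispose of the trivial range: if $d(r,L)$ is below any fixed constant the claimed inequality holds just by taking $c$ large, since the left-hand side never exceeds $1$; so from now on assume $d(r,L)$ is large enough that every leaf lies at distance more than $2$ from $r$ (in particular no node of $B_2$ is a leaf).

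Second, I would factor the joint law down the tree, rooting $T$ at $r$. Conditioning on $x_r$ makes the subtrees $\{T_v:v\in N^1(r)\}$ mutually independent; conditioning on $x_v$ makes the subtrees hanging below $v$ mutually independent; each $T_v$ depends on the leaves only through $L_v\defas L\cap T_v$; and $v$ separates its descendants from $r$. Hence
\[
P(x_{B_2}\mid x_L)=P(x_r\mid x_L)\prod_{v\in N^1(r)}\Big(P(x_v\mid x_r,x_{L_v})\prod_{w\in\mathrm{ch}(v)}P(x_w\mid x_v,x_{L_w})\Big),
\]
and the same identity with every conditioning on $x_L$ deleted gives $P(x_{B_2})$. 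These are products of numbers in $[0,1]$, so $|\prod\alpha_i-\prod\beta_i|\le\sum_i|\alpha_i-\beta_i|$ bounds $|P(x_{B_2}\mid x_L)-P(x_{B_2})|$ by $|P(x_r\mid x_L)-P(x_r)|+\sum_v|P(x_v\mid x_r,x_{L_v})-P(x_v\mid x_r)|+\sum_{v,w}|P(x_w\mid x_v,x_{L_w})-P(x_w\mid x_v)|$, a sum of at most $1+D+D^2$ terms. Each term is the deviation of the conditional law of one node from its ``leaves-averaged'' value in a tree Ising model where that node additionally feels an external field of strength $\le\gamma$ from its parent; and since conditioning on the parent spin merely tilts that node's two-point law by the fixed factor $e^{\theta x_v x_{\mathrm{par}}}$ with $|\theta|\le\gamma<\tfrac{\log2}{2D}$, each term is, up to a multiplicative $e^{2\gamma}\le 2$, equal to the zero-field quantity $\Delta(\rho)\defas\max_{x_\rho,x_{L_\rho}}\big|P(x_\rho\mid x_{L_\rho})-P(x_\rho)\big|$ for the corresponding subtree, whose root lies at distance at least $d(r,L)-2$ from its leaves.

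Third --- the heart of the matter --- I would prove $\Delta(\rho)\le c_0\,\rho_0^{\,d(\rho,L_\rho)}$ for an absolute constant $c_0$ and a rate $\rho_0\le 2D\tanh\gamma<\log2<2^{-1/3}$. Lemma \ref{lemma_postoneg_theta} lets me take all $\theta_{ij}>0$; Lemma \ref{lemma_worstcase_config}, together with ferromagnetic monotonicity (Griffiths' inequality, noting that in a zero-field model $P(x_\rho)=\tfrac12$), lets me restrict to the single boundary configuration $x_\rho=1$, $x_{L_\rho}=\mathbf 1$ with all $\theta_{ij}=\gamma$. For this configuration the upward magnetization recursion reads $\mu_v=\tanh\!\big(\sum_{w\in\mathrm{ch}(v)}\operatorname{atanh}(\mu_w\tanh\gamma)\big)$ with $\mu_\ell=1$ at leaves, and unrolling it along the (unique) root-to-leaf paths gives $|\mu_\rho|\le\sum_{\ell\in L_\rho}(\kappa\tanh\gamma)^{\,d(\rho,\ell)}$ with $\kappa$ a constant close to $1$; grouping leaves by depth (at most $D^{d}$ leaves at depth $d$) and summing the resulting geometric series --- convergent since $D\kappa\tanh\gamma<\log2<1$ --- yields $\Delta(\rho)=\tfrac12|\mu_\rho|\le c_0\,\rho_0^{\,d(\rho,L_\rho)}$ with $\rho_0\defas D\kappa\tanh\gamma$. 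Combining with Step two, $|P(x_{B_2}\mid x_L)-P(x_{B_2})|\le 2(1+D+D^2)c_0\,\rho_0^{\,d(r,L)-2}=c\,\rho_0^{\,d(r,L)}$ where $c\defas2(1+D+D^2)c_0\rho_0^{-2}$ depends only on $D$ and $\gamma$, and since $\rho_0<2^{-1/3}=e^{-\log2/3}$ this is at most $c\,e^{-\frac{\log2}{3}d(r,L)}$, which is the claim.

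The main obstacle is Step three: carrying out the worst-case magnetization recursion rigorously with explicit constants --- in particular pinning down $\kappa$ and checking that $D\kappa\tanh\gamma$ stays below $1$ (indeed below $2^{-1/3}$) for every $\gamma<\tfrac{\log2}{2D}$, so the geometric sum over leaf layers closes. A secondary source of care is the bookkeeping in Step two: the bounded external fields induced by conditioning on parent spins have to be tracked, but they only contribute the harmless factors $e^{2\gamma}$ and the polynomial-in-$D$ multiplicities, all of which are absorbed into $c$.
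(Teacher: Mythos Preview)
Your proposal is correct and mirrors the paper's architecture: factor $P(x_{B_2}\mid x_L)$ down the rooted tree, bound the product difference by a sum of single-node deviations, and then control each single-node deviation. Your Step two is exactly the paper's factorization-plus-telescoping argument, and your $e^{2\gamma}$ treatment of parent conditioning is what the paper isolates as Lemma~\ref{lemma_isingtree_neighborcorrelationdecay} (with constant $4$ instead of your $2$).

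The one genuine methodological difference is Step three. The paper proves the single-node bound (Lemma~\ref{lemma_isingtree_correlationdecay}) by reducing, via Lemmas~\ref{lemma_postoneg_theta}--\ref{lemma_allpostheta_monotonicity}, to the complete $D$-ary tree with all couplings equal to $\frac{\log 2}{2D}$ and all-ones leaves, writing the one-level probability recursion $a(d+1)$ explicitly in terms of $a(d)$, and checking algebraically that $|a(d+1)-\tfrac12|<e^{-\log 2/3}\,|a(d)-\tfrac12|$. You instead unroll the $\tanh/\operatorname{atanh}$ magnetization recursion along root-to-leaf paths and close a geometric sum over leaf layers. Your route is the standard Dobrushin-style argument; it makes the dependence on the full leaf set (not just the nearest leaf) visible and in fact yields a faster contraction rate $D\gamma<\tfrac{\log2}{2}$ per level than the paper's $e^{-\log2/3}$, at the cost of carrying the somewhat informal constant $\kappa$ (which, once you note $\operatorname{atanh}(y)\le\frac{\gamma}{\tanh\gamma}y$ for $0\le y\le\tanh\gamma$, becomes simply $\kappa\tanh\gamma\le\gamma$). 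The paper's route is shorter and delivers the exact rate $e^{-\log2/3}$ appearing in the lemma statement directly, with no side constants to absorb.
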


%
%

\subsection{Non-degeneracy in Ising Models with Correlation Decay}
\label{subsec_ising_nondegen}

Now using the results from the previous section, we turn our attention to the question of non-degeneracy. In particular, we have the following lemma which says that if an Ising graphical model has almost exponential correlation decay and its edge parameters satisfy certain conditions, then it also satisfies Assumption \ref{assump_nondegeneracy}.
For the proof, refer the Appendix.
\begin{lemma}\label{lemma_ising_nondegen}
Consider an Ising graphical model with edge parameters $\theta_{ij}$ bounded in the absolute value by
$0 < \beta < |\theta_{ij}| < \gamma$, max degree $D$, and having correlation decay as follows
\begin{equation*}
   \left| P(x_i,x_{N^1(i)},x_{N^2(i)}) - P(x_i,x_{N^1(i)},x_{N^2(i)} | x_B) \right| 
   < c \exp\left(-\alpha \min\left(d(i,B),\frac{g-2}{2}\right)\right)
\end{equation*}

\noindent $\forall \; i, B, x_i, x_{N^1(i)},x_{N^2(i)}$. If the girth 
$g> 2+\frac{2}{\alpha}\Big\{ (2D+11)\log 2 + \log c + \log\left( 1+2^D e^{2\gamma} \right) + 2 \gamma (D+3)-\log\left| \sinh 2\beta \right| \Big\}$,
then this graphical model satisfies Assumption \ref{assump_nondegeneracy}
with $\epsilon = 2^{-7}e^{-6\gamma D} \sinh^2 (2\beta)$.
\end{lemma}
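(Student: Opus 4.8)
The plan is to reduce both inequalities of Assumption \ref{assump_nondegeneracy} to a single quantitative lower bound on the conditional mutual information $I(X_i;X_j\mid X_S)=H(X_i\mid X_S)-H(X_i\mid X_S,X_j)$ between $i$ and a \emph{true} neighbor $j\in N(i)$, where $S$ is a conditioning set of at most $D$ nodes lying within distance $2$ of $i$: namely $S=A$ for \eqref{eqn_nondegen_2}, and $S=A\cup\{l\}$ for \eqref{eqn_nondegen_1}, the extra node $l\in N(j)$ only contributing one more bounded term below. So it suffices to show $I(X_i;X_j\mid X_S)>\epsilon$ for every such $S$ and every edge $(i,j)$ with $|\theta_{ij}|>\beta$.

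First I would localize, using the girth together with the assumed correlation decay. Choose $B$ to be the set of nodes at distance $r$ from $i$, with $r$ chosen so that (i) the subgraph induced on the ball strictly inside $B$ is a tree (possible since $\mathrm{Girth}(G)=g$), and (ii) $d(i,B)\ge\tfrac{g-2}{2}$. Marginalizing the correlation-decay hypothesis down to the variables $x_i,x_S,x_j$ (all within distance $2$ of $i$, hence among $\{i\}\cup N^1(i)\cup N^2(i)$) gives $\big|P(x_i,x_S,x_j\mid x_B)-P(x_i,x_S,x_j)\big|<2^{O(D)}c\,e^{-\alpha(g-2)/2}$ for every $x_B$, and likewise for each marginal of this triple. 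Feeding these $\ell_1$ bounds into \eqref{eqn_hleqp} and subtracting the four entropy terms that make up a conditional mutual information (exactly as in the proof of Lemma \ref{lemma_greedyalgo_requirement}) yields $\big|I(X_i;X_j\mid X_S)-I(X_i;X_j\mid X_S,X_B)\big|<\eta$, with $\eta$ an explicit quantity that is exponentially small in $g$. The girth lower bound in the statement is precisely what makes $\eta$ smaller than half of the bound obtained in the next step.

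Next I would lower bound $I(X_i;X_j\mid X_S,X_B)=\mathbb{E}_{x_S,x_B}\big[I(X_i;X_j\mid X_S=x_S,X_B=x_B)\big]$ term by term. Fix any $(x_S,x_B)$ of positive probability. By Lemma \ref{lemma_separation_independence}, the conditional law of the interior given $x_B$ is the induced-subgraph Ising model, which by (i) is a tree; further conditioning on $x_S$ leaves a forest, and the component of $i$ is a tree carrying the edge $(i,j)$ (parameter $|\theta_{ij}|>\beta$) together with external fields. Marginalizing that tree down to $(X_i,X_j)$ leaves a two-variable model $P(x_i,x_j)\propto\exp(\theta_{ij}x_ix_j+ax_i+bx_j)$ with $|a|,|b|\le D\gamma$: each edge incident to $i$ (resp.\ $j$) other than $(i,j)$ is one of at most $D-1$ pendant subtrees, and marginalizing a pendant subtree attached through an edge of parameter $\theta$ adds a field of magnitude at most $|\theta|\le\gamma$. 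A direct $2\times2$ computation gives $\|P_{X_iX_j}-P_{X_i}P_{X_j}\|_1=\tfrac{8|\sinh2\theta_{ij}|}{z^2}$, and bounding the partition function $z$ in terms of $\beta,\gamma,D$ gives a lower bound of the form $c_0\sinh(2\beta)\,e^{-c_1\gamma D}$, uniformly over $(x_S,x_B)$. The Pinsker-type inequality \eqref{eqn_pleqh} then gives $I(X_i;X_j\mid X_S=x_S,X_B=x_B)\ge\tfrac{1}{2\log2}\big(c_0\sinh(2\beta)e^{-c_1\gamma D}\big)^2$, hence the same bound for $I(X_i;X_j\mid X_S,X_B)$; tracking the constants one checks this is at least $2\epsilon$ with $\epsilon=2^{-7}e^{-6\gamma D}\sinh^2(2\beta)$ as in the statement. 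Combining with the previous paragraph, $I(X_i;X_j\mid X_S)>2\epsilon-\eta>\epsilon$, which is both \eqref{eqn_nondegen_2} and \eqref{eqn_nondegen_1}.

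The main obstacle is the first reduction step: one must propagate the assumed correlation-decay bound (stated for the joint over $\{i\}\cup N^1(i)\cup N^2(i)$ conditioned on $x_B$) through marginalization, through the total-variation-to-entropy inequality \eqref{eqn_hleqp}, and through the algebra of conditional mutual information, all while keeping the $D$- and $\gamma$-dependence explicit, since it is exactly the comparison of this error against the $\sinh^2(2\beta)e^{-\Theta(\gamma D)}$ signal that dictates the girth condition in the statement. The last step's tree computation is elementary once the field bound $|a|,|b|\le D\gamma$ is in hand; the only care needed there is justifying that bound, i.e.\ that conditioning on $X_S$ and $X_B$ and marginalizing the locally tree-like neighborhood really does leave a pairwise model with the original $\theta_{ij}$ and uniformly bounded fields, which is where Lemma \ref{lemma_separation_independence} and the local tree-likeness (the girth) enter.
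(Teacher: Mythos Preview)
Your plan is sound and would prove a version of the lemma, but it is a genuinely different decomposition from the paper's, and as written it does not reproduce the stated constants.

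The paper does \emph{not} localize by conditioning on a far sphere $B$. Instead it \emph{factorizes the Gibbs measure}. With $A'=\{j_1,\dots,j_k,z,w\}$ (in your notation $S\cup\{j\}$, where $z=j$ and $w=l$), it deletes from $G$ exactly the edges $(i,j_1),\dots,(i,j_k),(i,z),(z,w)$, obtaining a graph $\breve G$ with law $\breve P$, and lets $\widetilde G$ carry only those deleted edges, with law $\widetilde P$. Then $P(x_i,x_{A'})\propto\breve P(x_i,x_{A'})\,\widetilde P(x_i,x_{A'})$. In $\breve G$ the node $i$ is at distance at least $g-2$ from every node of $A'$, so the assumed decay, transferred from $P$ to $\breve P$ through the bounded-potential relation $\breve P\propto P/\Phi$, gives $|\breve P(x_i\mid x_{A'})-\tfrac12|<\breve\epsilon$ with $\breve\epsilon$ exponentially small in $g$; the Ising symmetry $\breve P(x_i)=\tfrac12$ then yields $P(x_i\mid x_{A'})=\widetilde P(x_i\mid x_{A'})(1+O(\breve\epsilon))$. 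The Pinsker step is finally carried out entirely in $\widetilde G$, a fixed star-plus-one-edge on at most $D{+}2$ vertices, where $\widetilde P(x_i\mid x_{A'})$ is the explicit logistic $\tfrac12(1+\tanh(\sum_m\theta_{ij_m}x_{j_m}+\theta_{iz}x_z))$.

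Your route---condition on $B$, get a tree, condition on $S$, marginalize the component of $(i,j)$ to a $2{\times}2$ model $\exp(\theta_{ij}x_ix_j+ax_i+bx_j)$---is correct and arguably more elementary, but it yields a weaker $\epsilon$. In your reduced model \emph{both} endpoints carry fields of order $(D{-}1)\gamma$, since all of $j$'s pendant subtrees are still present; hence $z\le4e^{(2D-1)\gamma}$, so your formula $\|P_{X_iX_j}-P_{X_i}P_{X_j}\|_1=8|\sinh2\theta_{ij}|/z^2$ gives only $\gtrsim\sinh(2\beta)e^{-(4D-2)\gamma}$, and Pinsker then yields $I\gtrsim\sinh^2(2\beta)\,e^{-8\gamma D}$ rather than $e^{-6\gamma D}$. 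In the paper's $\widetilde G$ the node $z=j$ has only the two edges $(i,z)$ and $(z,w)$, which is precisely why only three factors of $e^{-2\gamma D}$ appear (two from the $\cosh$'s in the logistic, one from the lower bound on $\min_{x_z}P(x_z\mid x_{A^*})$). So the sentence ``tracking the constants one checks this is at least $2\epsilon$ with $\epsilon=2^{-7}e^{-6\gamma D}\sinh^2(2\beta)$'' does not go through along your route; you would prove Assumption~\ref{assump_nondegeneracy} with a smaller $\epsilon$ and a correspondingly stronger girth requirement. What the edge-factorization buys the paper is that the ``clean'' computation happens in a hand-picked tree $\widetilde G$ containing only the edges relevant to the two inequalities, rather than in the full local tree with all its extra branches.
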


Finally, the proof of Theorem \ref{thm_ising} follows directly by combining Theorem \ref{thm_main}, Proposition \ref{prop_isingmodel_setcorrelationdecay} and Lemma \ref{lemma_ising_nondegen}. For complete details, refer the Appendix.

\section{Simulations}\label{section_simulations}
In this section, we present the results of numerical experiments evaluating the performance of our
algorithm. We note that to satisfy the conditions so that our theoretical
guarantees are applicable, the graph should have a large girth. However, it seems that the strong sufficiency
conditions are a result of our analysis. In fact our algorithm seems to work well even on graphs with
small girth. To demonstrate this fact we perform our experiments on graphs with small girth.
$\epsilon$, which is an input to the algorithm is chosen empirically.

In the first experiment, we evaluate our algorithm on grids of various sizes. Fig. \ref{fig:grid_comparison} compares
the sample complexity and computational complexity of our algorithm to those of \citep{ravikumar} which will be henceforth
referred to as RWL. Note that RWL is specifically tailored to the Ising model, and leverages this to yield lower
sample complexity. Ours is a generic algorithm that can be used for any discrete graphical model, and thus requires more
(but comparable) number of samples. It can be seen however that our algorithm is much faster than RWL.

\begin{figure}[ht]
  \centering
  \subfigure[]
  {
    \includegraphics[width=8.6cm,scale=1]{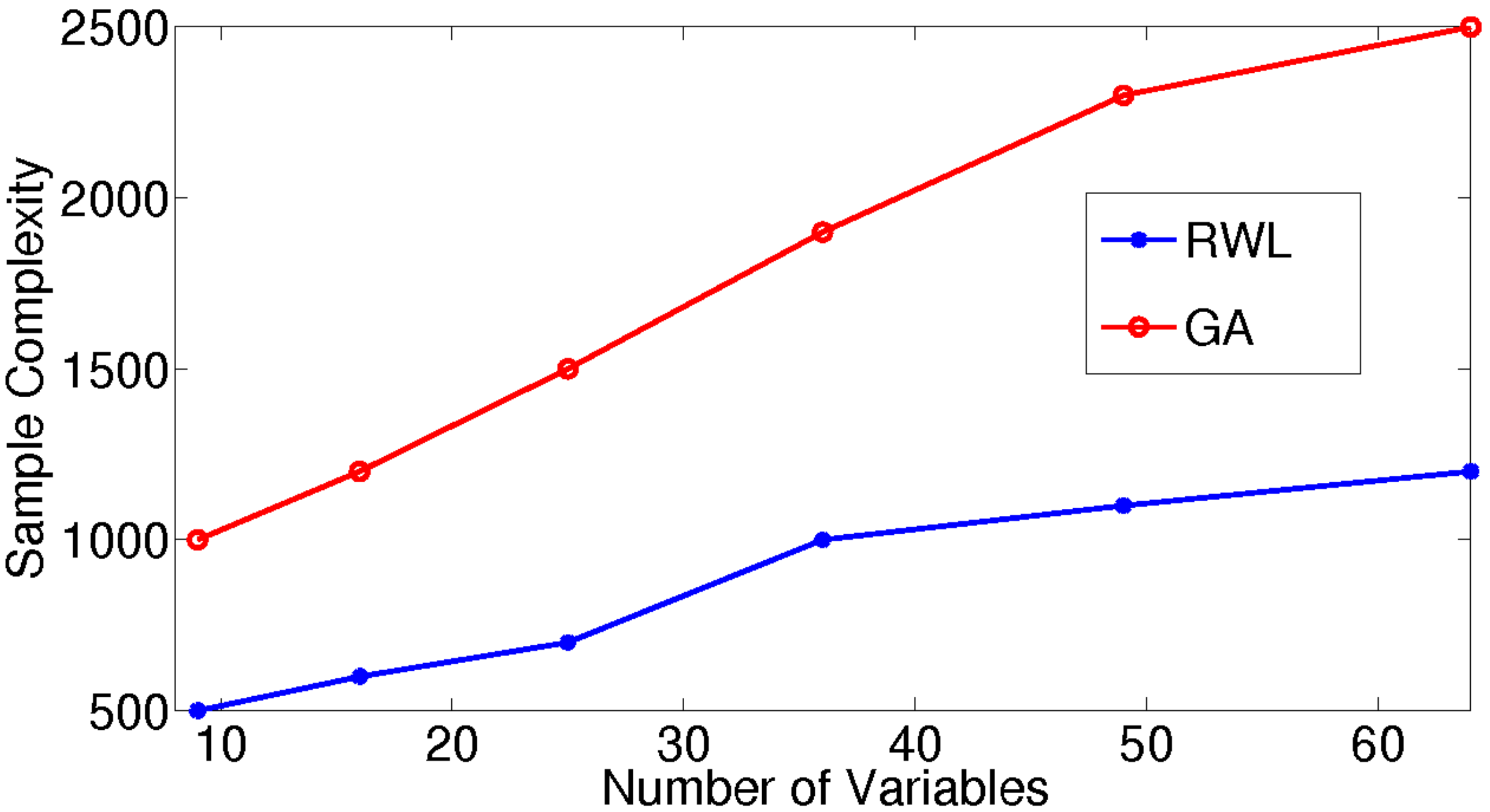}
    \label{fig:grid_samplecomplexity}
  }
  \subfigure[]
  {
    \includegraphics[width=8.8cm,scale=1]{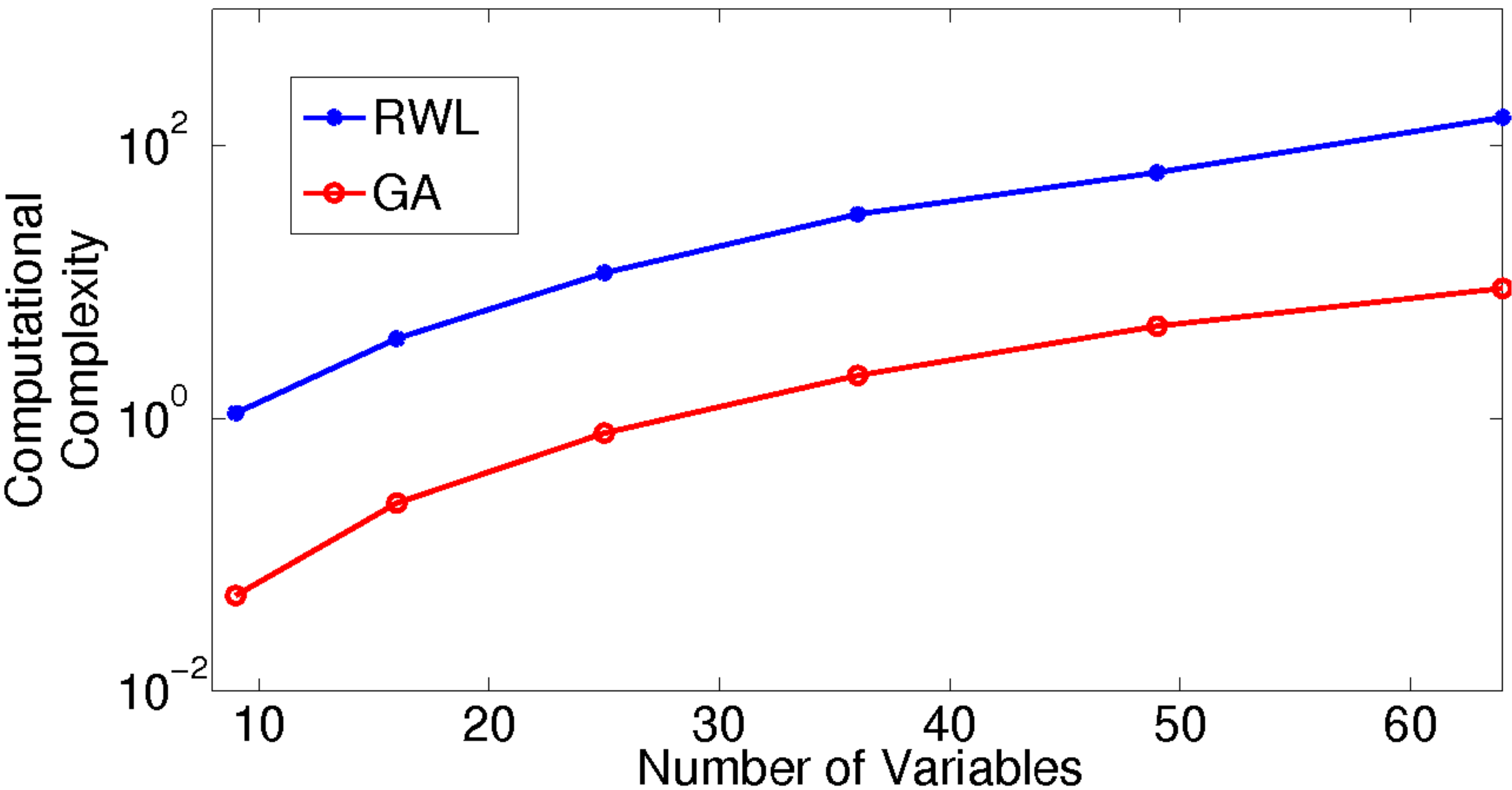}
    \label{fig:grid_compcomplexity}
  }
  \vspace{-.3cm}
  \caption{
    Plots of
    (a) sample complexity and
    (b) computational complexity
    of our algorithm (GA) and that of \citep{ravikumar} (RWL) for various grid sizes. Edge parameters are all chosen to be equal to $0.5$.
    X-axis represents the number of variables ($9$ for a $3\times 3$ grid,
    $16$ for a $4\times 4$ grid and so on). In (a), Y-axis represents the sample complexity
    which is taken to be the minimum number of samples required to obtain a probability of
    success of $0.95$. In (b), Y-axis is in logarithmic scale and represents the time taken
    in seconds for a single run using the number of samples from (a). All the above quantities
    are calculated by averaging over $50$ runs.
  }
  \label{fig:grid_comparison}
\end{figure}

Finally, we present an application of our algorithm to model senator interaction graph using the senate voting records,
following \citep{banerjee}. A \emph{Yea} vote is treated as a $1$ where as a \emph{Nay} vote or \emph{absentee} vote
is treated as $-1$. To avoid bias, we only consider senators who have voted in a fraction of atleast $0.75$ of all the bills
during the years 2009 and 2010. The output graph is presented in Fig. \ref{fig:senategraph}.

\begin{figure}[ht]
  \centering
  \includegraphics[width=1\linewidth]{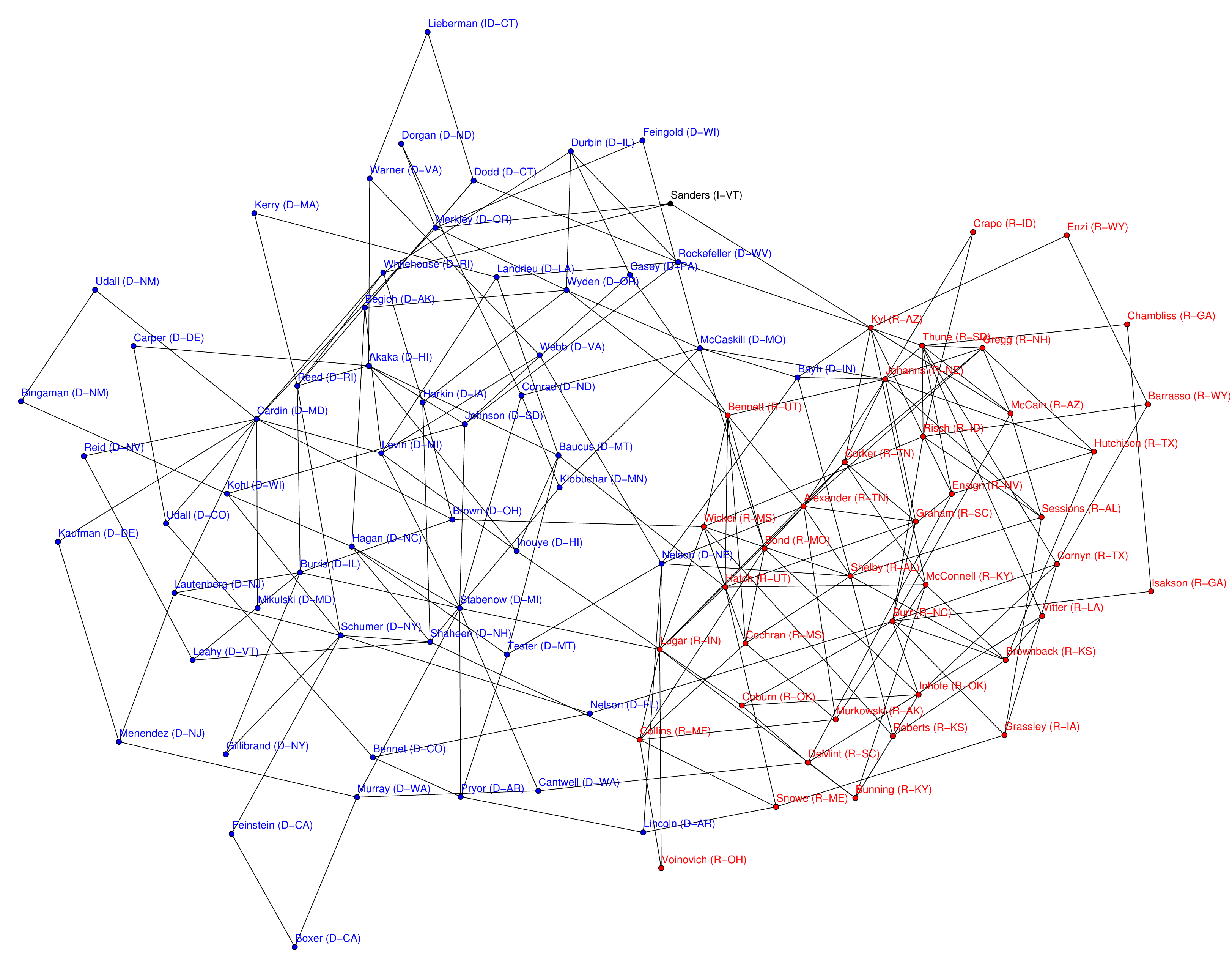}
  \caption{Blue nodes represent democrats, red nodes represent republicans and black node represents an independent.
	    We can make some preliminary observations from the graph. Most of the democrats are connected to other
	    democrats and most of the republicans are connected to other republicans (in particular, the number of
	    edges between democrats and republicans is approximately $0.1$ fraction of the total number of edges).
	    The senate minority leader, McConnell is well connected to other republicans where as the senate majority
	    leader, Reid is not well connected to other democrats. Sanders and Lieberman, both of who caucus with
	    democrats have more edges to democrats than to republicans. We use the graph drawing algorithm of Kamada
	    and Kawai to render the graph \citep{kamada}.
	  }
  \label{fig:senategraph}
\end{figure}

\section{Discussion}\label{section_discussion}

We developed a simple greedy algorithm for Markov structure learning. The algorithm is simple to implement and has low computational complexity. We then showed that under some non-degeneracy, correlation decay, maximum degree and girth assumptions on the MRF, our algorithm recovers the correct graph structure with $O(\epsilon^{-4}\log \frac{p}{\delta})$ samples. We then specialize our conditions to prove a self-contained result for the most popular discrete graphical model - the Ising model.

The success of our algorithm can be further improved by post-processing via {\em pruning}. In particular, as mentioned, the neighborhood of a node as estimated by our algorithm always includes the true neighborhood -- but it may also include spurious nodes. The latter can be then identified by checking each node of the estimated neighborhood, to see if it actually provides a reduction in conditional entropy over and above all the other nodes. Analysis of the improvement achieved by such a procedure is more challenging, but it may be likely that doing so will reveal an algorithm that can handle much larger degrees and smaller girths.

\acks{This work was partially supported by ARO grant W911NF-10-1-0360.
We thank Jason K. Johnson for letting us use his graph drawing code
for the senator graph.
}


\appendix \label{appendix}
\section*{Appendix}
We will first prove the lemmas required for proving Proposition \ref{prop_isingmodel_setcorrelationdecay}
\begin{proof}[Lemma \ref{lemma_postoneg_theta}]
The proof is by construction. For each node $v \in V$, let $M_v(x_v) = \eta_v x_v$.
For the root node, let $\eta_r \defas 1$. For any other node $v$, let $u$ be the parent of $v$ in the rooted
tree with root $r$. Define $\eta_v \defas \frac{\theta_{uv}}{\left| \theta_{uv} \right|}\eta_u$. Let $\Phi$
and $\widetilde{\Phi}$ be the potential functions corresponding to $P$ and $\widetilde{P}$ respectively. Then,
\begin{equation*}
 \begin{array}{rl}
  \Phi(x_V) &= \displaystyle \prod_{uv \in T} \exp\left(\theta_{uv}x_u x_v \right) \\
	    &= \displaystyle \prod_{uv \in T} \exp\left(\left| \theta_{uv} \right| \frac{\theta_{uv}}{\left| \theta_{uv} \right|} \eta_u^2 x_u x_v \right) \\
	    &= \displaystyle \prod_{uv \in T} \exp\left(\left| \theta_{uv} \right| \eta_u \eta_v x_u x_v \right) \\
	    &= \displaystyle \prod_{uv \in T} \exp\left(\left| \theta_{uv} \right| M_u(x_u) M_v(x_v) \right) \\
	    &= \widetilde{\Phi}(x_r,M_v(x_v), v \in V\setminus r)\\
 \end{array}
\end{equation*}

\noindent Since the potential functions are preserved by the bijections, so are the probabilities.
\end{proof}

We will first prove the following lemma which will help us in proving Lemma \ref{lemma_worstcase_config}.
\begin{lemma}\label{lemma_allpostheta_monotonicity}
Consider a tree Ising graphical model $T$ with root $r$, set of leaves $L$
and all positive edge parameters. Let $P$ be its probability distribution. Then, the quantity
$P(X_r = 1 \mid X_L=x_L)$ is monotonically increasing in $x_l,\; \forall \; l \in L$.
Moreover, $P(X_r = 1 \mid X_L=1)$ is monotonically increasing in $\theta_{ij} \; \forall \; \{i,j\} \in T$.
\end{lemma}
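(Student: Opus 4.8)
The plan is to root the tree $T$ at $r$ and exploit the standard belief-propagation (upward message) recursion for the conditional law $P(X_r=\cdot\mid X_L=x_L)$. For an edge $\{v,p\}$ with $p$ the parent of $v$, let $m_{v\to p}(\cdot)$ be the usual upward message and set $\rho_v\defas m_{v\to p}(1)/m_{v\to p}(-1)$; note $m_{v\to p}(-1)>0$, so this is well defined. A one-line computation gives, for a leaf $l$ with parent $p$, $\rho_l=e^{2\theta_{lp}x_l}$, and for an internal node $v$ with children $c_1,\dots,c_k$, writing $Q_v\defas\prod_j\rho_{c_j}$ and $\theta\defas\theta_{vp}$,
\[
\rho_v=\frac{e^{\theta}Q_v+e^{-\theta}}{e^{-\theta}Q_v+e^{\theta}},\qquad\text{while at the root}\qquad P(X_r=1\mid X_L=x_L)=\frac{Q_r}{Q_r+1},
\]
with $Q_r$ the product of the $\rho$'s over the children of $r$. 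I would record these identities as a short preliminary step (this uses the zero-field assumption, which removes any node potential).

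For the first claim I would induct from the leaves to the root, showing that $\rho_v$, viewed as a function of the leaf values in the subtree $T_v$, is nondecreasing in each coordinate. The base case is immediate since $\theta_{lp}>0$. For the inductive step: $Q_v=\prod_j\rho_{c_j}$ is a product of positive quantities, hence nondecreasing in each $\rho_{c_j}$; and $Q\mapsto(e^{\theta}Q+e^{-\theta})/(e^{-\theta}Q+e^{\theta})$ is an increasing M\"obius transformation on $Q>0$ because its determinant $e^{2\theta}-e^{-2\theta}$ is positive when $\theta>0$. Composing these monotonicities establishes the claim for $\rho_v$, and applying it at the children of $r$ together with the fact that $t\mapsto t/(t+1)$ is increasing finishes the first statement.

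For the second claim, fix all leaf values to $+1$. I would first establish the invariant $\rho_v\ge 1$ for every $v$ by induction: $\rho_l=e^{2\theta_{lp}}\ge 1$, and if all $\rho_{c_j}\ge 1$ then $Q_v\ge 1$, so that $\rho_v-1$ has the sign of $(e^{\theta}-e^{-\theta})(Q_v-1)\ge 0$. Given the invariant, a direct differentiation of the displayed formula yields
\[
\frac{\partial\rho_v}{\partial\theta_{vp}}=\frac{2\,(Q_v^2-1)}{\bigl(e^{-\theta}Q_v+e^{\theta}\bigr)^2}\ \ge\ 0,
\]
so $\rho_v$ is nondecreasing in its own parent-edge parameter; it is also nondecreasing in $Q_v$ (the same M\"obius argument), and $Q_v$ is nondecreasing in each $\rho_{c_j}$. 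A second induction over the tree then shows $\rho_v$ is nondecreasing in every edge parameter inside $T_v$, and evaluating at $r$ via $t\mapsto t/(t+1)$ gives the second statement.

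The main obstacle is bookkeeping rather than any deep idea: the second part requires two nested inductions — the positivity invariant $\rho_v\ge 1$ and the monotonicity of $\rho_v$ in each individual $\theta_{ij}$ through the recursive dependence $\rho_v=\rho_v\bigl(\theta_{vp},Q_v(\rho_{c_1},\dots,\rho_{c_k})\bigr)$ — and one must keep track of the fact that monotonicity in $\theta_{vp}$ holds only under $Q_v\ge 1$, which is precisely why the all-ones leaf configuration is the relevant one.
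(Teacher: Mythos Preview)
Your argument is correct and is essentially the paper's own proof in different coordinates: the paper also inducts from the leaves to the root via the same belief-propagation recursion, with its auxiliary variable $g_{\tilde c}=f_{\tilde c}/(1-f_{\tilde c})$ equal to your $Q_{\tilde c}$ and its one-line remark that $f(x_L{=}1)>\tfrac12$ playing exactly the role of your invariant $\rho_v\ge 1$ (equivalently $Q_v\ge 1$) for the $\theta$-monotonicity. Your M\"obius-map phrasing and explicit derivative $\partial\rho_v/\partial\theta_{vp}=2(Q_v^2-1)/(e^{-\theta}Q_v+e^{\theta})^2$ are a bit cleaner than the paper's algebraic manipulations, but the underlying strategy is identical.
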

\begin{proof}
For simplicity of notation, we define $f(x_L) \defas P(X_r=1 \mid X_L = x_L)$.
Let us prove the above statement by induction on the depth of the tree. For a tree of depth $1$, we have that
\begin{small}
\begin{equation*}
 \begin{array}{rl}
  f(x_L) &= \frac{\displaystyle \prod_{l \in L}\exp(\theta_{rl}x_l)}{\displaystyle \prod_{l \in L}\exp(\theta_{rl}x_l) + \displaystyle \prod_{l \in L}\exp(-\theta_{rl}x_l)} \\
		&= \frac{\displaystyle \prod_{l \in L, l \neq \widetilde{l}}\exp(\theta_{rl}x_l)}{\displaystyle \prod_{l \in L,l\neq \widetilde{l}}\exp(\theta_{rl}x_l) + \exp(-2\theta_{r\widetilde{l}}x_{\widetilde{l}})\displaystyle \prod_{l \in L,l\neq \widetilde{l}}\exp(-\theta_{rl}x_l)} \\
 \end{array}
\end{equation*}
\end{small}
\noindent Since $\theta_{r\widetilde{l}}>0$, $f(x_L)$ increases when $x_{\widetilde{l}}$ is changed from $-1$ to $1$.

Now, suppose the statement is true for all trees of depth upto $k$. Consider a tree of depth $k+1$, with root $r$.
Let $N(r)$ be the set of children of $r$. For every $c \in N(r)$, let $T_c$ be the subtree rooted at $c$ with the same
edge parameters as in $T$ and $L_c$ be the leaves of $T_c$. Let $P_c$ be the probability measure corresponding to $T_c$ and
$f_c(x_{L_c}) \defas P_c(x_c = 1 \mid x_{L_c} )$. Then, the conditional probability
of the root node can be written as


\begin{align} \label{eqn_prob_recursive}
  f(x_L) = \frac{\displaystyle \prod_{c \in N(r)} \left( \exp(\theta_{rc}) f_c(x_{L_c}) +\exp(-\theta_{rc}) \left(1-f_c(x_{L_c})\right) \right)}
  {B}
\end{align}
\noindent where
\begin{align*}
  B = &\displaystyle \prod_{c \in N(r)} \left( \exp(\theta_{rc}) f_c(x_{L_c}) +\exp(-\theta_{rc}) \left(1-f_c(x_{L_c})\right) \right) + \\
 &\displaystyle \prod_{c \in N(r)} \left( \exp(-\theta_{rc}) f_c(x_{L_c}) +\exp(\theta_{rc}) \left(1-f_c(x_{L_c})\right) \right)
\end{align*}
\eqref{eqn_prob_recursive} can now be manipulated to obtain \eqref{eqn_prxl}.

\begin{equation}\label{eqn_prxl}
  f(x_L) 
    = \frac{K_1}{ 
		      K_1 +
		      K_2\frac{g_{\widetilde{c}}(x_{\widetilde{c}})+\exp({2\theta_{r\widetilde{c}}})}
				{g_{\widetilde{c}}(x_{\widetilde{c}})\exp({2\theta_{r\widetilde{c}}})+1}
		}
\end{equation}
\noindent where $g_{\widetilde{c}}(x_{\widetilde{c}}) = \frac{f_{\widetilde{c}}(x_{L_{\widetilde{c}}})}{1-f_{\widetilde{c}}(x_{L_{\widetilde{c}}})}$,
and $K_1$ and $K_2 > 0$ are independent of $x_{L_{\widetilde{c}}}$ and $\theta_{r\widetilde{c}}$.
Since $K_2>0$ and $\theta_{r\widetilde{c}}>0$, $f(x_L)$ increases if $f_{\widetilde{c}}(x_{L_{\widetilde{c}}})$ increases.
So, for any leaf node, if its value changes from $-1$ to $1$, the corresponding $f_{\widetilde{c}}(x_{L_{\widetilde{c}}})$
increases and hence $f(x_L)$ increases, proving the induction claim.

Using the same induction argument as above and noting that $f(x_L=1) > \frac{1}{2}$, it can be seen that
$f(x_L=1)$ is monotonically increasing in $\theta_{ij}\; \forall \{i,j\} \in T$.
\end{proof}

\begin{proof}[Lemma \ref{lemma_worstcase_config}]
We know that $P(x_r) = \frac{1}{2}$ for $x_r = \pm 1$. Clearly any $x_L$ that maximizes $\left| P(x_r \mid x_L ) - P(x_r) \right|$
should either minimize or maximize $P(x_r \mid x_L )$. Note also that there is a one-one correspondence between such
configurations (i.e., for every maximizing configuration, there exists a minimizing configuration such that
both of them maximize $\left| P(x_r \mid x_L ) - P(x_r) \right|$).
From Lemma \ref{lemma_allpostheta_monotonicity}, we know that $x_L=1$ maximizes $P(x_r =1 \mid x_L )$ and by symmetry this should be the same as
$P(x_r = -1\mid x_L = -1)$ and equal $\displaystyle \max_{x_L} P(x_r = -1 \mid x_L )$.
So, we can conclude that $\left| P(x_r \mid x_L ) - P(x_r) \right|$ is maximized by $(x_r=1,x_L=1)$.
\end{proof}
%
%
%
\begin{lemma}\label{lemma_isingtree_correlationdecay}
Consider a tree Ising model $T$ with root node $r$, set of leaves $L$ and maximum degree $D$.
Let $P$ be its probability measure.  Suppose the absolute values of the edge parameters are bounded by
$\left| \theta_{ij} \right| < \frac{\log 2}{2D} \; \forall \; \{i,j\} \in T$. Then, we have that
$\left| P(x_r \mid x_L) - P(x_r) \right| < \exp( -\frac{\log 2}{3} d(r,L) ) \; \forall x_r, x_L$.
\end{lemma}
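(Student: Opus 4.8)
The plan is to propagate the root's conditional bias upward from the leaves along the tree, using the recursion for $f(x_L)$ already derived in the course of proving Lemma~\ref{lemma_allpostheta_monotonicity}, and to show that it contracts geometrically at every level. First, by Lemma~\ref{lemma_postoneg_theta} I would assume without loss of generality that all edge parameters are positive: the bijections $M_v$ there fix the root ($M_r=\mathrm{id}$), hence fix $P(X_r=\cdot)$, and they induce a bijection on leaf configurations that leaves $d(r,L)$ unchanged, so $\max_{x_r,x_L}|P(x_r\mid x_L)-P(x_r)|$ is unaffected. Since the zero-field model is invariant under $x\mapsto-x$ we have $P(X_r=x_r)=\tfrac12$, and by Lemma~\ref{lemma_worstcase_config} it then suffices to bound $f_r(\mathbf{1})-\tfrac12$, where for a node $v$ I write $f_v(\cdot)\defas P_v(X_v=1\mid X_{L_v}=\cdot)$ for the conditional in the subtree $T_v$ rooted at $v$, $L_v$ for its leaves, and $\mathbf{1}$ for the all-ones leaf assignment.

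Next I would recast \eqref{eqn_prob_recursive} in terms of the magnetizations $\mu_v\defas 2f_v(\mathbf{1})-1\in[0,1]$. Writing $a_c\defas e^{\theta_{rc}}f_c+e^{-\theta_{rc}}(1-f_c)=\cosh\theta_{rc}\,(1+\mu_c\tanh\theta_{rc})$ and $b_c\defas e^{-\theta_{rc}}f_c+e^{\theta_{rc}}(1-f_c)=\cosh\theta_{rc}\,(1-\mu_c\tanh\theta_{rc})$, equation \eqref{eqn_prob_recursive} reads $f_r=\prod_c a_c/(\prod_c a_c+\prod_c b_c)$, and the elementary identity $\big(\prod_c(1+t_c)-\prod_c(1-t_c)\big)\big/\big(\prod_c(1+t_c)+\prod_c(1-t_c)\big)=\tanh\big(\sum_c\operatorname{atanh}(t_c)\big)$, applied with $t_c=\mu_c\tanh\theta_{rc}$, yields the clean recursion
\begin{equation*}
\mu_r=\tanh\Big(\sum_{c\in\mathrm{ch}(r)}\operatorname{atanh}\big(\mu_c\tanh\theta_{rc}\big)\Big).
\end{equation*}
Since $|\theta_{rc}|<\gamma\defas\tfrac{\log 2}{2D}$ and $\mu_c\le1$, each argument lies in $[0,\tanh\gamma]$, and convexity of $\operatorname{atanh}$ on that interval (with $\operatorname{atanh}(0)=0$ and $\operatorname{atanh}(\tanh\gamma)=\gamma$) gives $\operatorname{atanh}(\mu_c\tanh\theta_{rc})\le\tfrac{\gamma}{\tanh\gamma}\,\mu_c\tanh\theta_{rc}\le\gamma\mu_c$; using $\tanh y\le y$ and that $r$ has at most $D$ children, this is the per-level contraction $\mu_r\le D\gamma\,\max_c\mu_c=\tfrac{\log 2}{2}\,\max_c\mu_c$.

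The final step is a short induction on the distance to the nearest leaf. For a leaf $l$ one has $\mu_l=1=(\tfrac{\log 2}{2})^{0}$, and $d(l,L_l)=0$; for an internal node $v$ one has $d(v,L_v)=1+\min_c d(c,L_c)$, so the contraction bound, the inductive hypothesis and $\tfrac{\log 2}{2}<1$ give $\mu_v\le\tfrac{\log2}{2}\max_c\mu_c\le\tfrac{\log2}{2}(\tfrac{\log2}{2})^{d(v,L_v)-1}=(\tfrac{\log2}{2})^{d(v,L_v)}$. Applying this at the root, $f_r(\mathbf{1})-\tfrac12=\tfrac12\mu_r\le\tfrac12(\tfrac{\log2}{2})^{d(r,L)}$, and since $\tfrac{\log2}{2}<2^{-1/3}=e^{-\frac{\log2}{3}}$ this is strictly below $e^{-\frac{\log2}{3}d(r,L)}$; combined with Steps~1--2 this gives the bound for every $x_r,x_L$.

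I expect the crux to be isolating the correct per-edge contraction constant and checking it is strictly below $1$: this is exactly where the degree-dependent hypothesis $|\theta_{ij}|<\tfrac{\log 2}{2D}$ enters, through $D\gamma=\tfrac{\log 2}{2}$, after which one only needs the numerical fact $\tfrac{\log 2}{2}<e^{-\log 2/3}$ to land on the stated rate $\tfrac{\log 2}{3}$. As an alternative to the $\operatorname{atanh}$ estimate, one could instead invoke the monotonicity of $f_r(\mathbf{1})$ in the $\theta_{ij}$ from Lemma~\ref{lemma_allpostheta_monotonicity} to reduce to the homogeneous tree with all parameters equal to $\gamma$ and then bound the resulting deterministic recursion directly; the contraction constant comes out the same.
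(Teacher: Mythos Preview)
Your proposal is correct. Both your argument and the paper's are ``per-level contraction'' proofs, but the executions differ. The paper first invokes the monotonicity part of Lemma~\ref{lemma_allpostheta_monotonicity} to reduce to the extremal homogeneous case---all edge parameters set to $\gamma=\tfrac{\log 2}{2D}$ on a complete $D$-ary tree---and then analyzes the single scalar recursion $a(d)\mapsto a(d+1)$, claiming after ``algebraic manipulations'' the contraction $|a(d+1)-\tfrac12|<e^{-\frac{\log 2}{3}}|a(d)-\tfrac12|$. You instead work directly on the original heterogeneous tree: you rewrite \eqref{eqn_prob_recursive} as the magnetization recursion $\mu_r=\tanh\!\big(\sum_c\operatorname{atanh}(\mu_c\tanh\theta_{rc})\big)$ and use convexity of $\operatorname{atanh}$ on $[0,\tanh\gamma]$ together with $\tanh y\le y$ to obtain $\mu_r\le D\gamma\max_c\mu_c=\tfrac{\log 2}{2}\max_c\mu_c$. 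This is more transparent about where the hypothesis $|\theta_{ij}|<\tfrac{\log 2}{2D}$ enters (through $D\gamma=\tfrac{\log 2}{2}$), gives a sharper contraction constant ($\tfrac{\log 2}{2}\approx 0.35$ versus the paper's $e^{-\frac{\log 2}{3}}\approx 0.79$), and handles unbalanced trees cleanly via the induction on $d(v,L_v)$ without any reduction step. The monotonicity-based alternative you mention at the end is exactly the paper's route.
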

\begin{proof}
Using Lemmas \ref{lemma_postoneg_theta}, \ref{lemma_worstcase_config} and \ref{lemma_allpostheta_monotonicity},
we can assume without loss of generality that the parameters $\theta_{ij}$ on all the edges are positive and
equal to $\frac{\log 2}{2D}$ (which is the maximum possible value), consider a complete D-ary tree and
concentrate on $\left| P(X_r=1 \mid X_L=1) - P(X_r=1) \right|$. For simplicity of notation, let
$\theta \defas \frac{\log 2}{2D}$. For a tree of depth $d$, let $a(d) \defas P(X_r =1\mid X_L = 1)$. We have that
\begin{equation*}
 a(d+1)
 = \frac{\left( \exp(\theta) a(d) + \exp(-\theta) \left(1-a(d)\right)\right)^D}
	  { 
	    \left( \exp(\theta) a(d) + \exp(-\theta) \left(1-a(d)\right)\right)^D +
	    \left( \exp(-\theta) a(d) + \exp(\theta) \left(1-a(d)\right)\right)^D
	  }
\end{equation*}
\noindent Using some algebraic manipulations and substituting the value of $\theta$, we obtain
\begin{equation*}
 \left| a(d+1) - \frac{1}{2} \right| < \exp\left(-\frac{\log 2}{3}\right) \left| a(d) - \frac{1}{2} \right|
\end{equation*}
and the result follows.
\end{proof}
We need the following lemma to prove Lemma \ref{lemma_isingtree_setcorrelationdecay}.
\begin{lemma}
\label{lemma_isingtree_neighborcorrelationdecay}
Consider a tree Ising model $T$, with root node $r$, set of leaves $L$ and maximum degree $D$.
Let $P$ be its probability measure.  Suppose the absolute values of the edge parameters are bounded by
$\left| \theta_{ij} \right| < \frac{\log 2}{2D} \; \forall \; \{i,j\} \in T$. Then, $\forall c$ such that
$c$ is a child of $r$, we have that
$\left| P(x_c \mid x_r,x_L) - P(x_c \mid x_r) \right| < 4 \exp( -\frac{\log 2}{3} d(r,L) ) \; \forall x_r, x_j, x_L$.
\end{lemma}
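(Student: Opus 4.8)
The plan is to reduce the two-hop statement about $P(x_c \mid x_r, x_L)$ to the one-hop correlation decay already established in Lemma \ref{lemma_isingtree_correlationdecay}. The key observation is that $c$, being a child of the root $r$, is itself the root of a subtree $T_c$ (with leaf set $L_c = L \cap T_c$), and conditioning on $x_r$ along the single edge $\{r,c\}$ acts like adding an external field at $c$. First I would write, via Bayes' rule and the Markov property that $r$ separates $c$ from $L \setminus L_c$,
\begin{equation*}
P(x_c \mid x_r, x_L) = \frac{P(x_c \mid x_r)\, P(x_{L_c} \mid x_c)}{\sum_{x_c'} P(x_c' \mid x_r)\, P(x_{L_c} \mid x_c')},
\end{equation*}
so that the dependence on $x_L$ enters only through the likelihood ratio $P(x_{L_c}\mid x_c=1)/P(x_{L_c}\mid x_c=-1)$, and the dependence on $x_r$ only through $P(x_c \mid x_r)$, which is a fixed number in $[e^{-\theta_{rc}}, e^{\theta_{rc}}]$ up to normalization and hence bounded away from $0$ and $1$ since $|\theta_{rc}| < \frac{\log 2}{2D}$.

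Next I would quantify how much the likelihood-ratio factor can move $P(x_c \mid x_r, \cdot)$ away from $P(x_c \mid x_r)$. Applying Lemma \ref{lemma_isingtree_correlationdecay} to the subtree $T_c$ (which has the same degree and parameter bounds, and whose depth is $d(r,L)-1$), we get $|P(x_c \mid x_{L_c}) - P(x_c)| < \exp(-\tfrac{\log 2}{3}(d(r,L)-1))$, i.e. conditioning on the leaves of $T_c$ barely perturbs the marginal of $c$; equivalently the likelihood ratio $P(x_{L_c}\mid x_c=1)/P(x_{L_c}\mid x_c=-1)$ is within $O(\exp(-\tfrac{\log 2}{3}d(r,L)))$ of $1$. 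Plugging a likelihood ratio of the form $1 + \eta$ with $|\eta|$ exponentially small into the Bayes expression above, and using that the prior factor $P(x_c \mid x_r)$ is bounded in $(\kappa, 1-\kappa)$ for a constant $\kappa$ depending only on the parameter bound, a short perturbation estimate gives $|P(x_c \mid x_r, x_L) - P(x_c \mid x_r)| = O(|\eta|)$, and absorbing the $\exp(\tfrac{\log 2}{3})$ factor into the constant $4$ yields the claimed bound.

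The main obstacle I anticipate is the bookkeeping in the second step: turning the statement "the marginal of $c$ moves by at most $\epsilon'$ under leaf-conditioning" into "the likelihood ratio is $1 + O(\epsilon')$" and then back into "the $x_r$-conditional moves by at most $O(\epsilon')$" requires keeping the prior probabilities $P(x_c \mid x_r)$ uniformly bounded away from $0$ and $1$ — this is where the hypothesis $|\theta_{ij}| < \frac{\log 2}{2D}$ is used — and checking that the constant that emerges is no larger than $4$. One has to be slightly careful that the worst-case configuration for this two-hop quantity is again covered (the analogues of Lemmas \ref{lemma_postoneg_theta} and \ref{lemma_worstcase_config} for the child node), but since we are only after an upper bound with an explicit constant, it suffices to bound crudely and let the constant absorb the slack. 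Everything else is the elementary Bayes/Markov manipulation sketched above.
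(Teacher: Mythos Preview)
Your proposal is correct and is essentially the paper's own argument, just phrased in Bayes/likelihood-ratio language. The paper first uses Lemmas~\ref{lemma_postoneg_theta} and~\ref{lemma_allpostheta_monotonicity} to reduce to the extremal case $(x_c,x_r,x_L)=(1,1,1)$ and then writes $b(d)\defas P(X_c=1\mid X_r=1,X_L=1)=\dfrac{e^{\theta}a(d-1)}{e^{\theta}a(d-1)+e^{-\theta}(1-a(d-1))}$, which is exactly your Bayes identity specialized to that case (with $a(d-1)$ playing the role of $P(x_c\mid x_{L_c})$ in the subtree $T_c$); the algebraic estimate $\bigl|b(d)-\tfrac{e^{\theta}}{e^{\theta}+e^{-\theta}}\bigr|<2\,|a(d-1)-\tfrac12|$ is precisely your ``short perturbation estimate,'' and the factor $2\cdot e^{\frac{\log 2}{3}}<4$ accounts for the constant.
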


\begin{proof}
Using Lemma \ref{lemma_postoneg_theta}
we can assume without loss of generality that the parameters $\theta_{ij}$ on all the edges are positive.
$(x_c,x_r)$ can take values $(\pm 1, \pm 1)$. For each of those values, the value of $x_L$ that
maximizes $\left| P(x_c \mid x_r, x_L) - \right.$\\
$\left.P(x_c \mid x_r) \right|$ either maximizes or minimizes
$P(x_c \mid x_r,x_L)$. Noting from (a slight extension to) Lemma \ref{lemma_allpostheta_monotonicity}
that $P(x_c \mid x_r,x_L)$ is monotonic in $x_L$, it suffices to consider the eight possibilities
$\left| P(X_c=\pm 1 \mid X_r=\pm 1, X_L=\pm 1) - \right.$\\
$\left.P(X_c=\pm 1 \mid X_r=\pm 1) \right|$.
We show how to calculate the above value for $x_c=1,x_r=1,x_L=1$. Interested readers can check that
the conclusions below apply to all the other cases as well.
Using Lemma \ref{lemma_allpostheta_monotonicity}, we can assume that the parameters 
$\theta_{ij}$ on all the edges except the edge $\{r,c\}$ are equal to $\frac{\log 2}{2D}$ and consider
a complete D-ary tree. Let $\theta \defas \theta_{rc}$.
We know that $P(X_c=1 \mid X_r=1) = \frac{\exp(\theta)}{\exp(\theta) + \exp(-\theta)}$.
Let $d$ be the depth of the tree and $b(d) \defas P(X_c =1 \mid X_r=1,X_L = 1)$.
We have $b(d) = \frac{\exp(\theta) a(d-1)}{\exp(\theta) a(d-1) + \exp(-\theta) \left(1-a(d-1)\right)}$
where $a(d)$ is as defined in Lemma \ref{lemma_isingtree_correlationdecay}.
%
Using some algebraic manipulations, it can be shown that $\left| b(d) - \frac{\exp(\theta)}{\exp(\theta) + \exp(-\theta)} \right|
< 2 \left|a(d-1) - \frac{1}{2}\right|$. Using Lemma \ref{lemma_isingtree_correlationdecay} finishes the proof.
\end{proof}
\begin{proof}[Lemma \ref{lemma_isingtree_setcorrelationdecay}]
Using Lemma \ref{lemma_isingtree_neighborcorrelationdecay}, we have
 \begin{equation*}
  \begin{array}{rl}
   &\left| P(x_r,x_{N^1(r)},x_{N^2(r)} \mid x_L) - P(x_r,x_{N^1(r)},x_{N^2(r)}) \right| \\
   &= \left| P(x_r \mid x_L) \displaystyle \prod_{j \in N^1(r)} P(x_j \mid x_r,x_L) 
			     \displaystyle \prod_{k \in N^2(r)} P(x_k \mid x_j,x_L) \right. \\
   &\;\;- \left.P(x_r) \displaystyle \prod_{j \in N^1(r)} P(x_j \mid x_r)
			     \displaystyle \prod_{k \in N^2(r)} P(x_k \mid x_j) \right| \\
   &< 2^{D^2+3} \exp\left( -\frac{\log 2}{3} \left(d(r,L)-1\right) \right)\\
   &= c \exp\left( -\frac{\log 2}{3} d(r,L) \right)
  \end{array}
 \end{equation*}
proving the result.
\end{proof}
\begin{proof}[Proposition \ref{prop_isingmodel_setcorrelationdecay}]
Let $I \defas \{i\}\cup N^1(i) \cup N^2(i)$. Let $B$ be a set that separates $I$ and $A$ such that
$d(I,B) = \min ( d(i,A), \frac{g}{2}-1 )$. Let $J$ be the component of nodes containing $I$ when the graph is separated
by $B$. We know that the induced subgraph on $J \cup B$ is a tree. Applying Lemma \ref{lemma_isingtree_setcorrelationdecay}
on this tree and using Lemma \ref{lemma_separation_independence}, we obtain $\left| P(x_I \mid x_B) - P(x_I \mid \widetilde{x}_B ) \right| <
2c \exp( -\frac{\log 2}{3} d(I,B) ) \; \forall x_I,x_B, \widetilde{x}_B$. Since $P(x_I)$ is a weighted average of
$P(x_I \mid x_B)$ for various $x_B$, we have 

\begin{equation*}
\left| P(x_I \mid x_B) - P(x_I) \right| < 2c \exp( -\frac{\log 2}{3} d(I,B) ) \; \forall x_I,x_B 
\end{equation*}

The result then follows since $P(x_I \mid x_A)$ is a weighted average of $P(x_I \mid x_B)$.
\end{proof}
\begin{proof}[Lemma \ref{lemma_ising_nondegen}]
Let the graphical model be denoted by $G(V,E)$, $\Phi(x_i,x_j) \defas \exp(\theta_{ij}x_ix_j)$ denote the potential
on edge $\{i,j\}$ when $X_i=x_i$ and $X_j=x_j$ and $\Phi(x_A)$ denote the potential due to all edges with both
vertices in $A$ when $X_A =x_A, \; \forall A \subseteq V$. In the following, we assume that the girth of the graph is 
$g > 4$. Consider a node $i$ and a subset of its neighbors
$j_1,\cdots, j_k, z$ and a node $w$ which is a neighbor of $z$. We know that the pairwise potentials satisfy
$\exp(-\gamma)<\Phi(x_i,x_j)<\exp(\gamma)$. Let $\breve{E} \defas E \setminus \left\{ \{i,j_1\},\cdots,\{i,j_k\},\{i,z\},\{z,w\} \right\}$
and consider the graph $\breve{G}(V,\breve{E})$ with the same potentials on all edges as in $G$.
Let $A \defas \{i,j_1,\cdots,j_k,z,w\}$ and choose any other set $B \subset V$. Let $P$ and $\breve{P}$ be the probability mass functions
corresponding to $G$ and $\breve{G}$ respectively. Similarly let $d(i,j)$ and $\breve{d}(i,j)$ be the distance between $i$ and $j$
in $G$ and $\breve{G}$ respectively. Suppose further that $d(i,B)=d$. Then, $\breve{d}(i,B)>d(A,B)=d$. Note that,

\begin{equation}\label{eqn_modified_actual_probrelation}
 \breve{P}(x_A,x_B) = \frac{1}{\breve{Z}} \frac{P(x_A,x_B)}{\Phi(x_A)}
\end{equation}
\noindent where $\breve{Z}$ is an appropriate normalizing constant. Note that
$\frac{1}{\breve{Z}} \displaystyle \sum_{x_A} \frac{P(x_A)}{\Phi(x_A)} = \displaystyle \sum_{x_A}\breve{P}(x_A) = 1$.
It follows from this that $\exp(-\gamma) < \frac{1}{\breve{Z}} < \exp(\gamma)$.
Using (\ref{eqn_modified_actual_probrelation}), the hypothesis that an Ising model has almost exponential correlation decay,
we obtain the following inequalities after some algebraic manipulations,
\begin{equation}\label{corrdecay_longeqn}
  |\breve{P}(x_A,x_B)-\breve{P}(x_A)\breve{P}(x_B)| <
  c 2^{D+3} \exp(4\gamma) \exp( -\alpha \min(d,\frac{g-2}{2}) ) P(x_B)
\end{equation}
%
\begin{equation}\label{marginal_eqn}
  \breve{P}(x_B) \geq
  \exp(-2\gamma) \left( 1-2^{D+2}c \exp( -\alpha \min(d,\frac{g-2}{2}) ) \right) P(x_B)
\end{equation}
$\forall x_A,x_B$.Combining (\ref{corrdecay_longeqn}) and (\ref{marginal_eqn}), we obtain
\begin{equation*}
 |\breve{P}(x_A,x_B)-\breve{P}(x_A)\breve{P}(x_B)| <
 c 2^{D+3}\exp(6\gamma) \frac{\exp(-\alpha \min(d,\frac{g-2}{2}))}{1-2^{D+2}c \exp( -\alpha \min(d,\frac{g-2}{2}) )} \breve{P}(x_B)
\end{equation*}
\noindent and subsequently by marginalizing, we obtain
\begin{equation*}
 |\breve{P}(x_i,x_B)-\breve{P}(x_i)\breve{P}(x_B)| <
 c 2^{2D+4}\exp(6\gamma) \frac{\exp(-\alpha \min(d,\frac{g-2}{2}))}{1-2^{D+2}c \exp( -\alpha \min(d,\frac{g-2}{2}) )} \breve{P}(x_B)
\end{equation*}
\noindent Let $A' \defas A \setminus \{i\}$. 
Since $d(i,A') = 2$, we have that $\breve{d}(i,A') \geq g-2$.
So, $\exists \; B \subseteq V$ separating $i$ and $A'$ in $\breve{G}$ such that $d(i,B)\geq \frac{g-2}{2}$. Then, $\forall \; x_i,x_{A'}$
\begin{equation}\label{eqn_modified_corrdecay}
 \begin{array}{rl}
  |\breve{P}(x_i \mid x_{A'}) - \breve{P}(x_i)|
  &= \left|\displaystyle \sum_{x_B}\left(\breve{P}(x_i \mid x_B)-\breve{P}(x_i)\right)\breve{P}(x_B \mid x_{A'})\right| \\
  &< c 2^{2D+4}\exp(6\gamma) \frac{\exp(-\alpha \frac{g-2}{2})}{1-2^{D+2}c \exp( -\alpha \frac{g-2}{2} )} \\
  &< 2^{-(D+6)} \exp(-2\gamma (D+1)) \left|\sinh(2\beta)\right| \defas \breve{\epsilon}
 \end{array}
\end{equation}
where the last inequality follows from the lower bound on girth $g$ in the hypothesis.

Now consider the graph $\widetilde{G}(V,\widetilde{E})$ where $\widetilde{E} \defas \left\{ \{i,j_1\},\cdots,\{i,j_k\},\{i,z\},\{z,w\} \right\}$.
Let the potentials on the edges in $\widetilde{G}$ be the same as those in $G$ and denote the corresponding probability
mass function by $\widetilde{P}$. Clearly, we have the following relation between $P, \breve{P}$ and $\widetilde{P}$.
\begin{equation*}\label{eqn_prob_w/wo_relation}
 P(x_A) = \frac{1}{Z}\breve{P}(x_A)\widetilde{P}(x_A) \; \forall \; x_A
\end{equation*}
where $Z$ is an appropriate normalizing constant.
Using (\ref{eqn_modified_corrdecay}) and the symmetry of the Ising model
(i.e., $\breve{P}(x_i) = \frac{1}{2}$ for $x_i = \pm 1$), we obtain
\begin{equation*}
 \begin{array}{rl}
  P(x_i\mid x_{A'}) &= \frac{P(x_i, x_{A'})}{P(x_{A'})} \\
		    &= \frac{\frac{1}{Z}\breve{P}(x_i, x_{A'})\widetilde{P}(x_i, x_{A'})}{\frac{1}{Z}\displaystyle \sum_{x_i}\breve{P}(x_i,x_{A'})\widetilde{P}(x_i,x_{A'})} \\
		    &= \frac{\breve{P}(x_i, x_{A'})\widetilde{P}(x_i, x_{A'})}{\displaystyle \sum_{x_i}\breve{P}(x_i \mid x_{A'})\breve{P}(x_{A'})\widetilde{P}(x_i,x_{A'})} \\
		    &< \frac{\breve{P}(x_i\mid x_{A'})\widetilde{P}(x_i\mid x_{A'})}{\left(\frac{1}{2}-\breve{\epsilon}\right)} \\
		    &< \frac{1+2\breve{\epsilon}}{1-2\breve{\epsilon}}\widetilde{P}(x_i\mid x_{A'})
 \end{array}
\end{equation*}
after some algebraic manipulations. Similarly, we also have
\begin{equation*}
P(x_i\mid x_{A'}) > \frac{1-2\breve{\epsilon}}{1+2\breve{\epsilon}}\;\widetilde{P}(x_i\mid x_{A'})
\end{equation*}
which implies
\begin{equation*}
\left|P(x_i\mid x_{A'}) - \widetilde{P}(x_i \mid x_{A'})\right| < 8\breve{\epsilon}
\end{equation*}
Finally, letting $A^* \defas A' \setminus \{z\}$, we have,
\begin{equation*}
 \begin{array}{rl}
  &H(X_i \mid X_{A^*}) - H(X_i \mid X_{A'})\\
  &= \displaystyle \sum_{x_{A'}} P(x_{A'}) \displaystyle \sum_{x_i} P(x_i \mid x_{A'}) \log\left(\frac{P(x_i \mid x_{A'})}{P(x_i \mid x_{A^*})}\right)\\
  &= \displaystyle \sum_{x_{A'}} P(x_{A'}) D\left(P(X_i \mid x_{A'}) || P(X_i \mid x_{A^*})\right)\\
  &\geq \frac{1}{2\log 2}\displaystyle \sum_{x_{A'}} P(x_{A'}) \displaystyle \sum_{x_i} |P(x_i \mid x_{A'}) - P(x_i \mid x_{A^*})|^2\\
  &= \frac{1}{2}\displaystyle \sum_{x_{A^*}} P(x_{A^*}) \displaystyle \sum_{x_z} P(x_z \mid x_{A^*})\displaystyle \sum_{x_i} |P(x_i \mid x_{A'}) - P(x_i \mid x_{A^*})|^2\\
  &\geq \frac{1}{2}\displaystyle \sum_{x_{A^*},x_i} P(x_{A^*})  \displaystyle \min_{x_z} P(x_z \mid x_{A^*})
  \frac{1}{2}|P(x_i \mid x_{A^*},x_z=-1) - P(x_i \mid x_{A^*}, x_z = 1)|^2\\
  &\geq \frac{1}{4}\displaystyle \sum_{x_{A^*},x_i} P(x_{A^*}) \frac{\exp(-\gamma D)}{\exp(\gamma D) + \exp(-\gamma D)} \\
  &\;\;\;\;\left( \max\left(0,\left|\widetilde{P}(x_i \mid x_{A^*},x_z=-1) - \widetilde{P}(x_i \mid x_{A^*},x_z=1)\right| - 16 \breve{\epsilon}\right)\right)^2\\
  &> \frac{1}{8}\displaystyle \sum_{x_{A^*},x_i} P(x_{A^*}) \exp(-2\gamma D) \left(\frac{\left|\sinh(2\beta)\right|\exp(-2\gamma D)}{2} - 16\breve{\epsilon}\right)^2 \\
  &> \frac{1}{128} \exp(-6\gamma D)\sinh^2(2\beta) \\
 \end{array}
\end{equation*}
So, we have shown that under the given conditions, an Ising model satisfies (\ref{eqn_nondegen_2}) with
$\epsilon=\frac{1}{128} \exp(-6\gamma D)\sinh^2(2\beta)$. It is straightforward to note that the above proof can
also be used to show that the Ising model also satisfies (\ref{eqn_nondegen_1}) with the same $\epsilon$, completing
the proof of the lemma.

\end{proof}
\begin{proof}[Theorem \ref{thm_ising}]
The theorem follows directly from Theorem \ref{thm_main}, Proposition \ref{prop_isingmodel_setcorrelationdecay}
and Lemma \ref{lemma_ising_nondegen}.
\end{proof}
\end{document}